
\documentclass[10pt,journal,compsoc]{IEEEtran}

\usepackage{times}
\usepackage{helvet}
\usepackage{courier}
\usepackage{multicol}
\usepackage{braket}
\usepackage{array}
\usepackage{tabu}
\usepackage{diagbox}
% \usepackage{epstopdf}

% *** CITATION PACKAGES ***
%
\ifCLASSOPTIONcompsoc
  % IEEE Computer Society needs nocompress option
  % requires cite.sty v4.0 or later (November 2003)
  \usepackage[nocompress]{cite}
\else
  % normal IEEE
  \usepackage{cite}
\fi

\usepackage{graphicx}
\usepackage{subfigure}

% *** MATH PACKAGES ***
%
%\usepackage{amsmath}
% A popular package from the American Mathematical Society that provides
% many useful and powerful commands for dealing with mathematics.
%
% Note that the amsmath package sets \interdisplaylinepenalty to 10000
% thus preventing page breaks from occurring within multiline equations. Use:
%\interdisplaylinepenalty=2500
% after loading amsmath to restore such page breaks as IEEEtran.cls normally
% does. amsmath.sty is already installed on most LaTeX systems. The latest
% version and documentation can be obtained at:
% http://www.ctan.org/pkg/amsmath

\usepackage{amsmath}
\usepackage{amsthm}
\usepackage{amssymb}
\newtheorem{theorem}{Theorem}
\newtheorem{remark}{Remark}

% *** SPECIALIZED LIST PACKAGES ***
%
%\usepackage{algorithmic}
% algorithmic.sty was written by Peter Williams and Rogerio Brito.
% This package provides an algorithmic environment fo describing algorithms.
% You can use the algorithmic environment in-text or within a figure
% environment to provide for a floating algorithm. Do NOT use the algorithm
% floating environment provided by algorithm.sty (by the same authors) or
% algorithm2e.sty (by Christophe Fiorio) as the IEEE does not use dedicated
% algorithm float types and packages that provide these will not provide
% correct IEEE style captions. The latest version and documentation of
% algorithmic.sty can be obtained at:
% http://www.ctan.org/pkg/algorithms
% Also of interest may be the (relatively newer and more customizable)
% algorithmicx.sty package by Szasz Janos:
% http://www.ctan.org/pkg/algorithmicx

\usepackage{algorithm}
\usepackage{algorithmic}

\newcommand{\argminF}{\mathop{\mathrm{argmin}}\limits}

\usepackage{url}

% *** Do not adjust lengths that control margins, column widths, etc. ***
% *** Do not use packages that alter fonts (such as pslatex).         ***
% There should be no need to do such things with IEEEtran.cls V1.6 and later.
% (Unless specifically asked to do so by the journal or conference you plan
% to submit to, of course. )

% correct bad hyphenation here
\hyphenation{op-tical net-works semi-conduc-tor}

\begin{document}
%
% paper title
% Titles are generally capitalized except for words such as a, an, and, as,
% at, but, by, for, in, nor, of, on, or, the, to and up, which are usually
% not capitalized unless they are the first or last word of the title.
% Linebreaks \\ can be used within to get better formatting as desired.
% Do not put math or special symbols in the title.
% \title{Bare Demo of IEEEtran.cls for\\ IEEE Computer Society Journals}

\title{A Secure Federated Transfer Learning Framework}
%
%
% author names and IEEE memberships
% note positions of commas and nonbreaking spaces ( ~ ) LaTeX will not break
% a structure at a ~ so this keeps an author's name from being broken across
% two lines.
% use \thanks{} to gain access to the first footnote area
% a separate \thanks must be used for each paragraph as LaTeX2e's \thanks
% was not built to handle multiple paragraphs
%
%
%\IEEEcompsocitemizethanks is a special \thanks that produces the bulleted
% lists the Computer Society journals use for "first footnote" author
% affiliations. Use \IEEEcompsocthanksitem which works much like \item
% for each affiliation group. When not in compsoc mode,
% \IEEEcompsocitemizethanks becomes like \thanks and
% \IEEEcompsocthanksitem becomes a line break with idention. This
% facilitates dual compilation, although admittedly the differences in the
% desired content of \author between the different types of papers makes a
% one-size-fits-all approach a daunting prospect. For instance, compsoc 
% journal papers have the author affiliations above the "Manuscript
% received ..."  text while in non-compsoc journals this is reversed. Sigh.

\author{Yang Liu,
        Yan Kang,
        Chaoping Xing,
        Tianjian Chen,
        Qiang Yang,~\IEEEmembership{Fellow,~IEEE}% <-this % stops a space
\IEEEcompsocitemizethanks{

\IEEEcompsocthanksitem Yang Liu, Yan Kang and Tianjian Chen are with WeBank, Shenzhen, China. \protect\\

\IEEEcompsocthanksitem Chanping Xing is with the Shanghai Jiao Tong University, Shanghai China.\protect\\

\IEEEcompsocthanksitem Qiang Yang is with the Hong Kong University of Science and Technology, Hong Kong,
China.\protect\\

% \IEEEcompsocthanksitem Qiang Yang are with the Department
% of Electrical and Computer Engineering, Georgia Institute of Technology, Atlanta,
% GA, 30332.\protect\\
% note need leading \protect in front of \\ to get a newline within \thanks as
% \\ is fragile and will error, could use \hfil\break instead.
% E-mail: see http://www.michaelshell.org/contact.html
% \IEEEcompsocthanksitem J. Doe and J. Doe are with Anonymous University.
}% <-this % stops an unwanted space
% \thanks{Manuscript received April 19, 2005; revised August 26, 2015.}
}

% note the % following the last \IEEEmembership and also \thanks - 
% these prevent an unwanted space from occurring between the last author name
% and the end of the author line. i.e., if you had this:
% 
% \author{....lastname \thanks{...} \thanks{...} }
%                     ^------------^------------^----Do not want these spaces!
%
% a space would be appended to the last name and could cause every name on that
% line to be shifted left slightly. This is one of those "LaTeX things". For
% instance, "\textbf{A} \textbf{B}" will typeset as "A B" not "AB". To get
% "AB" then you have to do: "\textbf{A}\textbf{B}"
% \thanks is no different in this regard, so shield the last } of each \thanks
% that ends a line with a % and do not let a space in before the next \thanks.
% Spaces after \IEEEmembership other than the last one are OK (and needed) as
% you are supposed to have spaces between the names. For what it is worth,
% this is a minor point as most people would not even notice if the said evil
% space somehow managed to creep in.

% \markboth{This is the author's version of an article that has been published in this journal. Changes were made to this version by the publisher prior to publication}
% The paper headers

% \markboth{\tiny This article has been accepted for publication in this journal. Content may change prior to final publication.}%
\markboth{\tiny This is the author's version of an article that has been published in this journal. Changes were made to this version prior to publication. DOI: 10.1109/MIS.2020.2988525}{}
\IEEEtitleabstractindextext{%
\begin{abstract}
Machine learning relies on the availability of vast amounts of data for training. However, in reality, data are mostly scattered across different organizations and cannot be easily integrated due to many legal and practical constraints. To address this important challenge in the field of machine learning, we introduce a new technique and framework, known as federated transfer learning (FTL), to improve statistical modeling under a data federation. FTL allows knowledge to be shared without compromising user privacy and enables complementary knowledge to be transferred across domains in a data federation, thereby enabling a target-domain party to build flexible and effective models by leveraging rich labels from a source domain. This framework requires minimal modifications to the existing model structure and provides the same level of accuracy as the non-privacy-preserving transfer learning. It is flexible and can be effectively adapted to various secure multi-party machine learning tasks.
\end{abstract}
% A secure transfer cross validation approach is also proposed to guard FTL against negative transfer. 
% Note that keywords are not normally used for peerreview papers.
\begin{IEEEkeywords}
Federated Learning, Transfer Learning, Multi-party Computation, Secret Sharing, Homomorphic Encryption.
\end{IEEEkeywords}}

% \IEEEpubid{\makebox[\columnwidth]{\hfill 0000--0000/00/\$00.00~\copyright~2015 IEEE}%
% \hspace{\columnsep}\makebox[\columnwidth]{Published by the IEEE Computer Society\hfill}}

\IEEEpubid{\scriptsize Copyright (c) 2020 IEEE. Personal use is permitted. For any other purposes, permission must be obtained from the IEEE by emailing pubs-permissions@ieee.org.}

% make the title area
\maketitle

% To allow for easy dual compilation without having to reenter the
% abstract/keywords data, the \IEEEtitleabstractindextext text will
% not be used in maketitle, but will appear (i.e., to be "transported")
% here as \IEEEdisplaynontitleabstractindextext when the compsoc 
% or transmag modes are not selected <OR> if conference mode is selected 
% - because all conference papers position the abstract like regular
% papers do.
\IEEEdisplaynontitleabstractindextext
% \IEEEdisplaynontitleabstractindextext has no effect when using
% compsoc or transmag under a non-conference mode.

% For peer review papers, you can put extra information on the cover
% page as needed:
% \ifCLASSOPTIONpeerreview
% \begin{center} \bfseries EDICS Category: 3-BBND \end{center}
% \fi
%
% For peerreview papers, this IEEEtran command inserts a page break and
% creates the second title. It will be ignored for other modes.
\IEEEpeerreviewmaketitle

\IEEEraisesectionheading{
\section{Introduction}\label{sec:introduction}}
% Computer Society journal (but not conference!) papers do something unusual
% with the very first section heading (almost always called "Introduction").
% They place it ABOVE the main text! IEEEtran.cls does not automatically do
% this for you, but you can achieve this effect with the provided
% \IEEEraisesectionheading{} command. Note the need to keep any \label that
% is to refer to the section immediately after \section in the above as
% \IEEEraisesectionheading puts \section within a raised box.

% The very first letter is a 2 line initial drop letter followed
% by the rest of the first word in caps (small caps for compsoc).
% 
% form to use if the first word consists of a single letter:
% \IEEEPARstart{A}{demo} file is ....
% 
% form to use if you need the single drop letter followed by
% normal text (unknown if ever used by the IEEE):
% \IEEEPARstart{A}{}demo file is ....
% 
% Some journals put the first two words in caps:
% \IEEEPARstart{T}{his demo} file is ....
% 
% Here we have the typical use of a "T" for an initial drop letter
% and "HIS" in caps to complete the first word.
\IEEEPARstart{R}{ecent} Artificial Intelligence (AI) achievements have been depending on the availability of massive amounts of labeled data. For example, AlphaGo has been trained using a dataset containing 30 million moves from 160,000 actual games. The ImageNet dataset has over 14 million images. However, across various industries, most applications only have access to small or poor quality datasets. Labeling data is very expensive, especially in fields which require human expertise and domain knowledge. In addition, data needed for a specific task may not all be stored in one place. Many organizations may only have unlabeled data, and some other organizations may have very limited amounts of labels. It has been increasingly difficult from a legislative perspective for organizations to combine their data, too. For example, General Data Protection Regulation (GDPR) \cite{regulation2016general}, a new bill introduced by the European Union, contains many terms that protect user privacy and prohibit organizations from exchanging data without explicit user approval. How to enable the large number of businesses and applications that have only small data (few samples and features) or weak supervision (few labels) to build effective and accurate AI models while complying with data privacy and security laws is a difficult challenge.

To address this challenge, Google introduced a federated learning (FL) system \cite{DBLP:journals/corr/McMahanMRA16} in which a global machine learning model is updated by a federation of distributed participants while keeping their data stored locally. Their framework requires all contributors share the same feature space. On the other hand, secure machine learning with data partitioned in the feature space has also been studied \cite{Gascn2016SecureLR}. These approaches are only applicable in the context of data with either common features or common samples under a federation. In reality, however, the set of such common entities may be small, making a federation less attractive and leaving the majority of the non-overlapping data under-utilized.

In this paper, we propose Federated Transfer Learning (FTL) to address the limitations of existing federated learning approaches. It leverages transfer learning \cite{Pan:2010:CSC:1772690.1772767} to provide solutions for the entire sample and feature space under a federation. Our contributions are as follows:
\begin{enumerate}
    \item We formalize the research problem of federated transfer learning in a privacy-preserving setting to provide solutions for federation problems beyond the scope of existing federated learning approaches;
    \item We provide an end-to-end solution to the proposed FTL problem and show that the performance of the proposed approach in terms of convergence and accuracy is comparable to non-privacy-preserving transfer learning; and
    \item We provide some novel approaches to incorporate additively homomorphic encryption (HE) and secret sharing using beaver triples into two-party computation (2PC) with neural networks under the FTL framework such that only minimal modifications to the neural network is required and the accuracy is almost lossless.
    % (in particular, in case where the beaver triple protocol is used, there is no accuracy lost). This is advantageous as most privacy-preserving deep learning approaches suffer from accuracy loss.
\end{enumerate}

\section{Related Work}
Recent years have witnessed a surge of studies on encrypted machine learning. For example, Google introduced a secure aggregation scheme to protect the privacy of aggregated user updates under their federarted learning framework \cite{Bonawitz:2017:PSA:3133956.3133982}.  CryptoNets \cite{cryptonets-applying-neural-networks-to-encrypted-data-with-high-throughput-and-accuracy} adapted neural network computations to work with data encrypted via Homomorphic Encryption (HE). SecureML \cite{DBLP:journals/iacr/MohasselZ17} is a multi-party computing scheme which uses secret-sharing and Yao's Garbled Circuit for encryption and supports collaborative training for linear regression, logistic regression and neural networks. 

Transfer learning aims to build an effective model for an application with a small dataset or limited labels in a target domain by leveraging knowledge from a different but related source domain. In recent years, there have been tremendous progress in applying transfer learning to various fields such as image classification and sentiment analysis. The performance of transfer learning relies on how related the domains are. Intuitively, parties in the same data federation are usually organizations from the same industry. Therefore, they can benefit more from knowledge propagation. To the best of our knowledge, FTL is the first framework to enable federated learning to benefit from transfer learning.

\section{Preliminaries and Security Definition}
Consider a source domain dataset $\mathcal{D}_A := \{(x_i^A,y_i^A)\}_{i=1}^{N_A}$
where $x_i^A \in R^a$ and $y_i^A \in \{+1,-1\}$ is the $i$ th label, and a target domain $\mathcal{D}_B :=\{x_j^B\}_{j=1}^{N_B}$ where $x_j^B \in R^a$. $\mathcal{D}_A$ and $\mathcal{D}_B$ are separately held by two private parties and cannot be exposed to each other legally. We assume that there exists a limited set of co-occurrence samples $\mathcal{D}_{AB} := \{(x_i^A,x_i^B)\}_{i=1}^{N_{AB}}$ and a small set of labels for data from domain B in party A's dataset: $\mathcal{D}_c:=\{(x_i^B,y_i^A)\}_{i=1}^{N_c}$, where $N_c$ is the number of available target labels. Without loss of generality, we assume all labels are in party A, but all the deduction here can be adapted to the case where labels exist in party B. One can find the set of commonly shared sample IDs in a privacy-preserving setting by masking data IDs with encryption techniques (e.g., the RSA scheme). We utilize the RSA Intersection module of the FATE\footnote{https://github.com/FederatedAI/FATE} framework to align co-occurrence samples of the two parties.
% Here, we assume that A and B already found or both know their commonly shared sample IDs. 
Given the above setting, the objective is for the two parities to build a transfer learning model to predict labels for the target-domain party accurately without exposing data to each other.

In this paper, we adopt a security definition in which all parties are \textit{honest-but-curious}. We assume a threat model with a semi-honest adversary $\mathcal{D}$ who can corrupt at most one of the two data clients. For a protocol $P$ performing $(O_A,O_B)=P(I_A,I_B)$, where $O_A$ and $O_B$ are party A's and party B's outputs and $I_A$ and $I_B$ are their inputs, respectively, $P$ is secure against A if there exists an infinite number of $(I'_B,O'_B)$ pairs such that $(O_A,O'_B)=P(I_A,I'_B)$. It allows flexible control of information disclosure compared to complete zero knowledge security.

\section{The Proposed Approach}
In this section, we introduce our proposed transfer learning model.

Deep neural networks have been widely adopted in transfer learning to find implicit transfer mechanisms. Here, we explore a general scenario in which hidden representations of A and B are produced by two neural networks $u_i^A=\it{Net}^A(x_i^A)$ and $u_i^B=\it{Net}^B(x_i^B)$, where $u^A\in \mathbb{R}^{N_A\times d}$ and $u^B\in \mathbb{R}^{N_B\times d}$, and $\it{d}$ is the dimension of the hidden representation layer. The neural networks $\it{Net}^A$ and $\it{Net}^B$ serve as feature transformation functions that project the source features of party A and party B into a common feature subspace in which knowledge can be transferred between the two parties. Any other symmetric feature transformation techniques can be applied here to form the common feature subspace. However, neural networks can help us build an end-to-end solution to the proposed FTL problem.

To label the target domain, a general approach is to introduce a prediction function $\varphi(u_j^B) = \varphi(u_1^A,y_1^A...u_{N_A}^A,y_{N_A}^A,u_j^B)$. Without losing much generality, we assume $\varphi(u_j^B)$ is linearly separable. That is, $\varphi(u_j^B)=\Phi^A\mathcal{G}(u_j^B)$. In our experiment, we use a translator function, $\varphi(u_j^B)=\frac{1}{N_A}\sum_i^{N_A}y_i^{A}u_i^{A}(u_j^{B})'$, where $\Phi^A=\frac{1}{N_A}\sum_i^{N_A}y_i^{A}u_i^{A}$ and $\mathcal{G}(u_j^B)=(u_j^{B})'$. We can then write the training objective using the available labeled set as:
\begin{equation}\label{l1}
\argminF_{\Theta^{A},\Theta^{B}}\mathcal{L}_1=\sum_{i=1}^{N_{c}}\ell_1(y_i^{A},\varphi(u_i^{B}))
\end{equation}
where $\Theta^A$, $\Theta^B$ are training parameters of $\it{Net}^A$ and $\it{Net}^B$, respectively. Let $L_A$ and $L_B$ be the number of layers for $\it{Net}^A$ and $\it{Net}^B$, respectively. Then, $\Theta^A=\{\theta_l^A\}_{l=1}^{L_A}$, $\Theta^B=\{\theta_l^B\}_{l=1}^{L_B}$ where $\theta_l^A$ and $\theta_l^B$ are the training parameters for the $l$th layer. $\ell_1$ denotes the loss function. For logistic loss, $\ell_1(y,\varphi)=\log(1+\exp(-y\varphi))$.

In addition, we also aim to minimize the alignment loss between A and B in order to achieve feature transfer learning in a federated learning setting:
\begin{equation}
\argminF_{\Theta^{A},\Theta^{B}}\mathcal{L}_2=\sum_{i=1}^{N_{AB}}\ell_2(u_i^{A},u_i^{B})
\end{equation}
where $\ell_2$ denotes the alignment loss. Typical alignment losses can be $-u_i^{A}(u_i^{B})'$ or $||u_i^A-u_i^B||_F^2$. For simplicity, we assume that it can be expressed in the form of $\ell_2(u_i^{A},u_i^{B})=\ell_2^A(u_i^{A})+\ell_2^B(u_i^{B})+\kappa u_i^{A}(u_i^{B})'$, where $\kappa $ is a constant.

The final objective function is:
\begin{equation}\label{loss}
\argminF_{\Theta^{A},\Theta^{B}}\mathcal{L}=\mathcal{L}_1+\gamma\mathcal{L}_2+\frac{\lambda}{2}(\mathcal{L}_3^A+\mathcal{L}_3^B)
\end{equation}
where $\gamma$ and $\lambda$ are the weight parameters, and $\mathcal{L}_3 ^A=\sum_l^{L_A}||\theta_l^A||_F^2$ and $\mathcal{L}_3 ^B=\sum_l^{L_B}||\theta_l^B||_F^2$ are the regularization terms. Now we focus on obtaining the gradients for updating $\Theta^A$, $\Theta^B$ in back propagation. For $i \in \{A,B\}$, we have:
\begin{equation}\label{grad}
\frac{\partial\mathcal{L}}{\partial\theta_l^i}=\frac{\partial\mathcal{L}_1}{\partial\theta_l^i}+\gamma\frac{\partial\mathcal{L}_2}{\partial\theta_l^i}+\lambda\theta_l^i.
\end{equation}

% Under the assumption that A and B are not allowed to expose their raw data, a privacy-preserving approach needs to be developed here to compute equations (\ref{loss}) and (\ref{grad}). Here, we adopt a second order Taylor approximation for loss and gradient computation:
% \begin{equation}\label{taylorloss}
%     \ell_1(y,\varphi) \approx \ell_1(y,0)+\frac{1}{2}C(y)\varphi+\frac{1}{8}D(y)\varphi^2
% \end{equation}
% where $C(y)=\frac{\partial\ell_1}{\partial\varphi}|_{\varphi=0}$, $D(y)=\frac{\partial^2\ell_1}{\partial\varphi^2}|_{\varphi=0}$.
% \begin{equation}\label{taylorgrad}
% \frac{\partial\ell}{\partial\varphi}=\frac{1}{2}C(y)+\frac{1}{4}D(y)\varphi.
% \end{equation}

% For logistic loss, $C(y)=y$, $D(y)=y^2$.

% In the following two sections, we will discuss two alternative constructions of the secure FTL protocol: the first one is leveraging additively homomorphic encryption, and the second one is utilizing the secret sharing based on beaver triples. We carefully design the FTL protocol such that only minimal information needs to be encrypted or secretly shared between parties. Besides, the FTL protocol is designed to be compatible with other homomorphic encryption and secret sharing schemes with minimal modifications.  
Under the assumption that A and B are not allowed to expose their raw data, a privacy-preserving approach needs to be developed to compute equations (\ref{loss}) and (\ref{grad}). Here, we adopt a second order Taylor approximation for logistic loss:
\begin{equation}\label{taylorloss}
    \ell_1(y,\varphi) \approx \ell_1(y,0)+\frac{1}{2}C(y)\varphi+\frac{1}{8}D(y)\varphi^2
\end{equation}
% where $C(y)=\frac{\partial\ell_1}{\partial\varphi}|_{\varphi=0}$,
and the gradient is:
% $D(y)=\frac{\partial^2\ell_1}{\partial\varphi^2}|_{\varphi=0}$.
\begin{equation}\label{taylorgrad}
\frac{\partial\ell_1}{\partial\varphi}=\frac{1}{2}C(y)+\frac{1}{4}D(y)\varphi.
\end{equation}

where, $C(y)=-y$, $D(y)=y^2$.

In the following two sections, we will discuss two alternative constructions of the secure FTL protocol: the first one is leveraging additively homomorphic encryption, and the second one is utilizing the secret sharing based on beaver triples. We carefully design the FTL protocol such that only minimal information needs to be encrypted or secretly shared between parties. Besides, the FTL protocol is designed to be compatible with other homomorphic encryption and secret sharing schemes with minimal modifications. 

\section{FTL using Homomorphic Encryption}
% \cite{Acar:2018:SHE:3236632.3214303}
% Additively Homomorphic Encryption and polynomial approximations have been widely used for privacy-preserving machine learning. The trade-off between efficiency and privacy common in such approximations is well-known \cite{info:doi/10.2196/medinform.8805}.  Applying equations (\ref{taylorloss}) and (\ref{taylorgrad}), and additively homomorphic encryption (denoted as $[[\cdot]]$), we obtain the privacy preserved loss function and the corresponding gradients for the two domains as:
Additively Homomorphic Encryption and polynomial approximations have been widely used for privacy-preserving machine learning. Applying equations (\ref{taylorloss}) and (\ref{taylorgrad}), and additively homomorphic encryption (denoted as $[[\cdot]]$), we obtain the privacy preserved loss function and the corresponding gradients for the two domains as:
\begin{equation}\label{encryptloss}
\begin{split}
[[\mathcal{L}]]&=\sum_i^{N_{c}}([[\ell_1(y_i^A,0)]]+\frac{1}{2}C(y_i^A)\Phi^A[[\mathcal{G}(u_i^B)]]\\
&+\frac{1}{8}D(y_i^A)\Phi^A[[(\mathcal{G}(u_i^B))'\mathcal{G}(u_i^B)]](\Phi^A)')\\&
+\gamma\sum_i^{N_{AB}}([[\ell_2^B(u_i^B)]]+[[\ell_2^A(u_i^A)]]+\kappa u_i^A[[(u_i^B)']])\\&+[[\frac{\lambda}{2}\mathcal{L}_3^A]]+[[\frac{\lambda}{2}\mathcal{L}_3^B]],
\end{split}
\end{equation}

\begin{equation}\label{encryptgradb}
\begin{split}
[[\frac{\partial\mathcal{L}}{\partial\theta_l^B}]] &=\sum_i^{N_{c}}\frac{\partial(\mathcal{G}(u_i^B))'\mathcal{G}(u_i^B)}{\partial u_i^B}[[(\frac{1}{8}D(y_i^A)(\Phi^A)'\Phi^A]]\frac{\partial  u_i^B}{\partial\theta_l^B}\\
&+\sum_i^{N_{c}}[[\frac{1}{2}C(y_i^A)\Phi^A]]\frac{\partial \mathcal{G}(u_i^B)}{\partial u_i^B}\frac{\partial u_i^B}{\partial\theta_l^B} \\& + \sum_i^{N_{AB}}([[\gamma \kappa u_i^A]]\frac{\partial  u_i^B}{\partial\theta_l^B}+[[\gamma\frac{\partial\ell_2^B(u_i^B)}{\partial\theta_l^B}]])+[[\lambda\theta_l^B]],
\end{split}
\end{equation}

\begin{equation}\label{encryptgrada}
\begin{split}
[[\frac{\partial\mathcal{L}}{\partial\theta_l^A}]] &=\sum_j^{N_{A}}\sum_i^{N_{c}}(\frac{1}{4}D(y_i^A)\Phi^A[[\mathcal{G}(u_i^B)'\mathcal{G}(u_i^B)]]\\
&+\frac{1}{2}C(y_i^A)[[\mathcal{G}(u_i^B)]])\frac{\partial \Phi^A}{\partial u_j^A}\frac{\partial u_j^A}{\partial\theta_l^A}\\
&+\gamma\sum_i^{N_{AB}}([[\kappa  u_i^B]]\frac{\partial u_i^A}{\partial\theta_l^A}+[[\frac{\partial \ell_2^A(u_i^A)}{\partial\theta_l^A}]])+[[\lambda\theta_l^A]].
\end{split}
\end{equation}

\subsection{FTL Algorithm - Homomorphic Encryption based}

With equations (\ref{encryptloss}), (\ref{encryptgradb}) and (\ref{encryptgrada}), we now design a federated algorithm for solving the transfer learning problem. See Figure \ref{al:1}. Let $[[\cdot]]_A$ and $[[\cdot]]_B$ be homomorphic encryption operators with public keys from A and B, respectively. A and B initialize and execute their respective neural networks $\it{Net^A}$ and $\it{Net^B}$ locally to obtain hidden representations ${u_i^A}$ and ${u_i^B}$. A then computes and encrypts components $\{\it{h}_k(u_i^A,y_i^A)\}_{k=1,2...K_A}$ and sends to B to assist calculations of gradients of $\it{Net^B}$. In the current scenario, $K_A=3$, $\it{h}_1^A(u_i^A,y_i^A)=\{[[\frac{1}{8}D(y_i^A)(\Phi^A)'(\Phi^A)]]_{A}\}_{i=1}^{N_c}$, $\it{h}_2^A(u_i^A,y_i^A)=\{[[\frac{1}{2}C(y_i^A)\Phi^A]]_{A}\}_{i=1}^{N_c}$, and $\it{h}_3^A(u_i^A,y_i^A)=\{[[\gamma \kappa u_i^A]]_{A}\}_{i=1}^{N_{AB}}$. Similarly, B then computes and encrypts components $\{\it{h}_k^B(u_i^B)\}_{k=1,2...K_B}$ and sends to A to assist calculations of gradients of $\it{Net^A}$ and loss $\mathcal{L}$. In the current scenario, $K_B=4$, $\it{h}_1^B(u_i^B)=\{[[(\mathcal{G}(u_i^B))'\mathcal{G}(u_i^B)]]_{B}\}_{i=1}^{N_c}$, $\it{h}_2^B(u_i^B)=\{[[\mathcal{G}(u_i^B)]]_{B}\}_{i=1}^{N_c}$, $\it{h}_3^B(u_i^B)=\{[[\kappa  u_i^B]]_{B}\}_{i=1}^{N_{AB}}$, and $\it{h}_4^B(u_i^B)=[[\frac{\lambda}{2}\mathcal{L}_3^B]]_{B}$.

\begin{figure}[!ht]
    % \centering

    \includegraphics[width=0.99\linewidth]{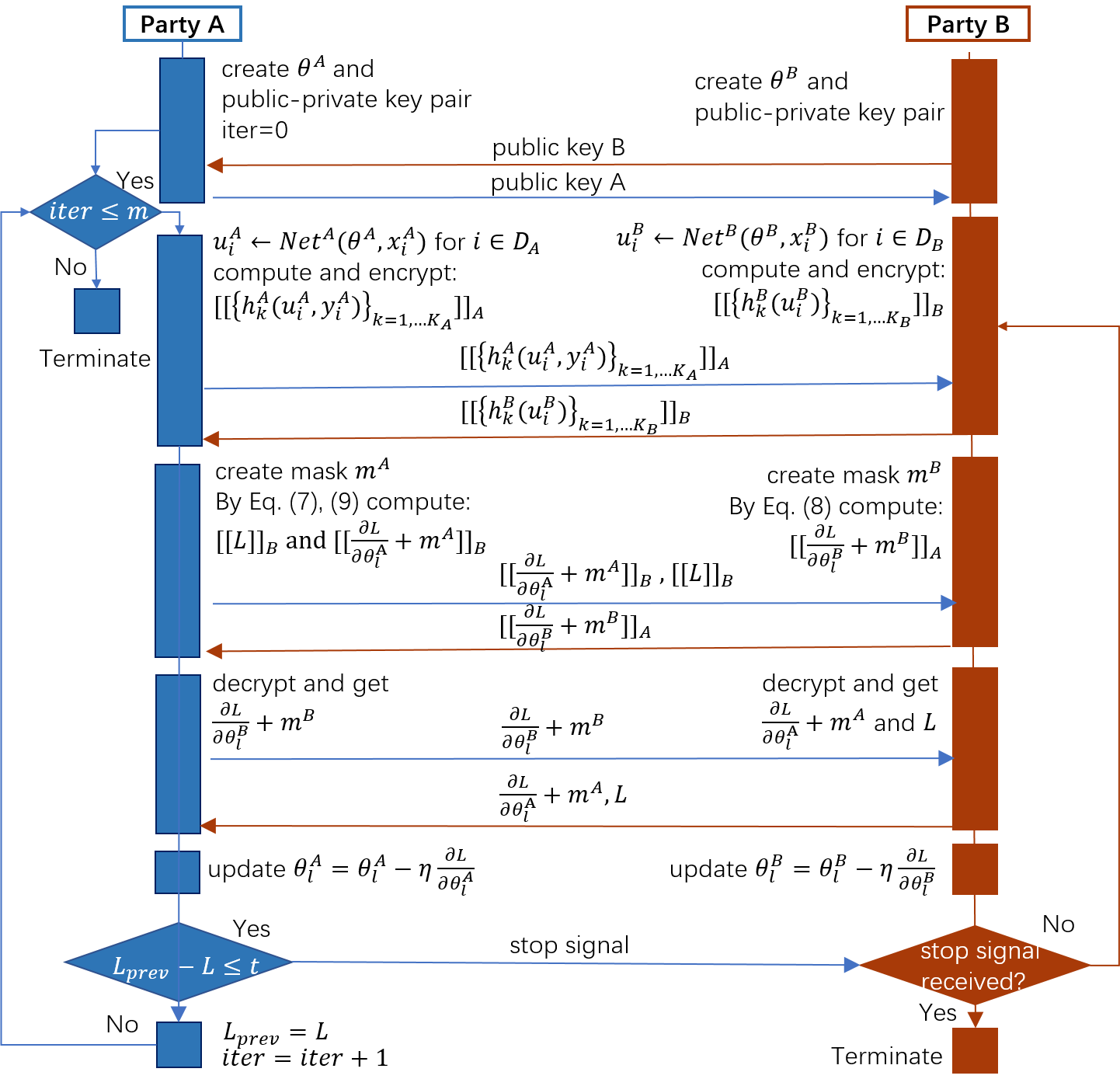}
    \caption{HE-based FTL algorithm workflow} 
    \label{al:1}
\end{figure}

% Recently, there has been much discussion on the potential risks associated with indirect privacy leakage through gradients \cite{Bonawitz:2017:PSA:3133956.3133982,Phong:2018:PDL:3196160.3196245}.

To prevent A's and B's gradients from being exposed, A and B further mask each gradient with an encrypted random value. They then send the masked gradients and loss to each other and decrypt the values locally. A can send termination signals to B once the loss convergence condition is met. Otherwise, they unmask the gradients, update the weight parameters with their respective gradients, and move on to next iteration. Once the model is trained, FTL can provide predictions for unlabeled data from party B. Algorithm \ref{algo_pred} summaries the prediction process.

% Specifically, for each unlabeled data, B computes ${u_j^B}$ with trained network parameters $\Theta^B$, and sends $[[\mathcal{G}(u_j^B)]]$ to A. A then evaluates and masks it with random values and sends the masked $\varphi(u_j^B)$ to B. B decrypts it and sends it back to A. A receives $\varphi(u_j^B)$, obtains the label, and sends the label to B. Note that the only source of performance degradation over the secure FTL process is the second-order Taylor approximation of the final loss function, rather than at every non-linear activation layer of the neural network. The computations within the networks are unaffected. As demonstrated in the experiments, the errors in loss and gradient calculations, as well as the loss in accuracy by FTL are minimal. Therefore, FTL is flexible to changes in neural network structures.

\begin{algorithm} [ht]
\caption{HE-based FTL: Prediction}
\begin{algorithmic}[1]\label{algo_pred}
\REQUIRE Model parameters $\Theta^A$, $\Theta^B$, $\{x_j^B\}_{j \in N_B}$
\STATE $\boldsymbol B$ do:
\STATE ${u_j^B} \xleftarrow{} \it{Net^B}(\Theta^B,x_j^B)$;
\STATE Encrypt $\{[[\mathcal{G}(u_j^B)]]_{B}\}_{j\in\{1,2,...,N_B\}}$ and send it to A;
\STATE $\boldsymbol A$ do:
\STATE Create a random mask $m^A$;
\STATE Compute $[[\varphi(u_j^B)]]_B=\Phi^A[[\mathcal{G}(u_j^B)]]_B$ and send $[[\varphi(u_j^B)+m^{A}]]_B$ to B;
\STATE $\boldsymbol B$ do:
\STATE Decrypt $\varphi(u_j^B)+m^A$ and send results to A;
\STATE $\boldsymbol A$ do:
\STATE Compute $\varphi(u_j^B)$ and $y_j^B$ and send $y_j^B$ to B;
%\RETURN $\{y_j^B\}_{j \in N_{B}}$;
\end{algorithmic}
\end{algorithm}

\subsection{Security Analysis}
\begin{theorem}
The protocol in the FTL training Algorithm (Figure \ref{al:1}) and Algorithm \ref{algo_pred} is secure under our security definition, provided that the underlying additively homomorphic encryption scheme is secure.
\end{theorem}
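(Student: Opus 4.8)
The plan is to prove the statement by a case analysis over which single party the semi‑honest adversary corrupts, showing in each case that the corrupted party's entire view of the execution is consistent with infinitely many choices of the other party's input, so that the weak‑secrecy requirement of our security definition holds. Two observations do almost all of the work: (i) by semantic security of $[[\cdot]]$, any ciphertext a party cannot decrypt is computationally indistinguishable from an encryption of an arbitrary plaintext and therefore leaks nothing; and (ii) every value a party ever obtains in the clear has first been shifted by a fresh, independently drawn additive mask chosen by the \emph{other} party, so that for any hypothesized true value there is a mask rendering the observation consistent.

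First I would go through Algorithm~\ref{algo1} round by round. When $B$ is corrupted, $B$'s incoming messages are the ciphertexts $h_k^A(u_i^A,y_i^A)$ encrypted under $A$'s key (undecryptable by $B$), the pair $[[\partial\mathcal{L}/\partial\theta_l^A+m^A]]_B$ together with its decryption, and $[[\mathcal{L}]]_B$ together with its decryption $\mathcal{L}$. By (i) the $h_k^A$ ciphertexts are information‑free for $B$; by (ii) the decrypted quantity $\partial\mathcal{L}/\partial\theta_l^A+m^A$ is blinded by $A$'s fresh random $m^A$ and constrains nothing; so the only genuine leakage to $B$ is the scalar loss $\mathcal{L}$ at each iteration. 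When $A$ is corrupted the situation is symmetric: $h_k^B(u_i^B)$ is encrypted under $B$'s key, $\partial\mathcal{L}/\partial\theta_l^B+m^B$ is blinded by $B$'s fresh $m^B$, and the only genuine leakage to $A$ is again $\mathcal{L}$. I would then repeat the accounting for Algorithm~\ref{algo_pred}: $A$ sees only $[[\mathcal{G}(u_j^B)]]_B$ (undecryptable) and, after the round trip, the prediction scores $\varphi(u_j^B)=\Phi^A\mathcal{G}(u_j^B)$, while $B$ sees only the masked $\varphi(u_j^B)+m^A$ and the returned labels $y_j^B$, which are $B$'s own outputs.

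The remaining and genuinely substantive step is to argue that, even granting $A$ the sequence of loss values $\mathcal{L}$ over all iterations and the prediction scores $\varphi(\cdot)$ (and symmetrically for $B$), there are still infinitely many $(I'_B,O'_B)$ producing the same view — here I read $O_A$ in the security definition as comprising $A$'s transcript together with its output $\Theta^A$. This holds because $B$'s private input — its raw features together with $\Theta^B$ — lives in a continuous, high‑dimensional, over‑parameterized space, whereas the leaked quantities form a short list of real numbers; concretely one exhibits an infinite family $\{I'_B\}$, for instance weight perturbations of $\it{Net}^B$ that preserve the hidden representations $u^B$ entering the leaked scalars, or alternative networks reproducing those representations, and checks that this family can be fixed once and kept consistent across all iterations — which is legitimate because the adversary is honest‑but‑curious (it runs the prescribed steps and does not adaptively probe) and the masks are re‑randomized each round, so chaining the per‑round leakages does not progressively pin the input down.

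I would close by noting the two boundary conditions of the statement: the guarantee is computational, conditioned exactly on the hardness assumption behind $[[\cdot]]$ as the theorem states, and it is stated only against a single corrupted client, so no collusion case arises. The main obstacle is thus not the cryptographic bookkeeping, which is routine once semantic security and additive blinding are in place, but the non‑injectivity argument for the residual scalar leakage ($\mathcal{L}$ and the $\varphi$'s): making precise that a handful of real outputs cannot determine the counterparty's high‑dimensional private input.
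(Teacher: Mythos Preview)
Your overall strategy matches the paper's: semantic security disposes of ciphertexts under the other party's key, fresh additive masks dispose of the blinded values, and an under-determined-system (``more unknowns than equations'') argument handles whatever plaintext remains. The paper's proof is terse and invokes exactly these three ingredients.

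There is, however, one omission in your round-by-round accounting. When $B$ is corrupted you list $B$'s incoming messages as the $h_k^A$ ciphertexts, the pair $[[\partial\mathcal{L}/\partial\theta_l^A+m^A]]_B$ with its decryption, and $[[\mathcal{L}]]_B$ with its decryption, and conclude that ``the only genuine leakage to $B$ is the scalar loss $\mathcal{L}$.'' You have missed the last exchange of the round: $A$ decrypts $\partial\mathcal{L}/\partial\theta_l^B+m^B$ and returns this plaintext to $B$, whereupon $B$ strips its \emph{own} mask $m^B$ and recovers $\partial\mathcal{L}/\partial\theta_l^B$ in the clear. This gradient depends on $A$'s private data through the loss, and it is precisely the residual leakage the paper singles out: ``party $A$ learns its own gradients at each step, but this is not enough for $A$ to learn any information from $B$ based on the inability of solving $n$ equations in more than $n$ unknowns.'' The symmetric omission occurs in your $A$-corrupted case. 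Your non-injectivity argument at the end is the right tool and would cover these gradients once you include them, so the defect is in the bookkeeping rather than in the method; but as stated your assertion that only the scalar $\mathcal{L}$ leaks is incorrect, and since the recovered gradients have the dimension of $\theta_l^B$ (respectively $\theta_l^A$), you should also revisit the ``short list of real numbers'' phrasing when you fold them in.
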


\begin{proof}
The training protocol in Figure \ref{al:1} and Algorithm \ref{algo_pred} do not reveal any information, because all A and B learned are the masked gradients. In each iteration, A and B create new random masks. The strong randomness and secrecy of the masks secure the information against the other party \cite{Du2004PrivacyPreservingMS}. During training, A learns its own gradients at each step, but this is not enough for A to learn any information from B based on the impossibility of solving $n$ equations with more than $n$ unknowns \cite{Du2004PrivacyPreservingMS}. In other words, there exist an infinite number of inputs from B that result in the same gradients A receives. Similarly, B cannot learn any information from A. Therefore, as long as the encryption scheme is secure, the proposed protocol is secure. During evaluation, A learns the predicted result for each sample from B, which is a scalar product, from which A cannot infer B's private information. B learns only the label, from which B cannot infer A's private information.
\end{proof}

At the end of the training process, each party (A or B) only obtains the model parameters associated with its own features, and remains oblivious to the data structure of the other party. At inference time, the two parties need to collaboratively compute the prediction results. Note that the protocol does not deal with the situation in which one (or both) of the two parties is (are) malicious. If A fakes its inputs and submits only one non-zero input, it might be able to tell the value of $u_i^B$ at the position of that input. It still cannot tell $x_i^B$ or $\Theta_B$, and neither party can obtain the correct prediction results.

\section{FTL using secret sharing}\label{BT}
Throughout this section, assume that any private value $v$ is shared between the two parties where $A$ keeps $\langle v\rangle_A$ and $B$ keeps $\langle v\rangle_B$ such that $v=\langle v\rangle_A+\langle v\rangle_B.$ To make it possible for the performance to be comparable with the previous construction, assume that $\ell_1(y,\varphi)$ and $\frac{\partial \ell_1}{\partial \varphi}$ can be approximated by the second order Taylor expansion following Equations \eqref{taylorloss} and \eqref{taylorgrad}. So $\mathcal{L},\frac{\partial\mathcal{L}}{\partial \theta_\ell^A}$ and $\frac{\partial\mathcal{L}}{\partial \theta_\ell^B}$ can be expressed as the following

\begin{equation}\label{polyloss}
\begin{split}
\mathcal{L}&=\sum_i^{N_C}\ell_1(y_i^A,0)+\frac{1}{2}C(y_i^A)\Phi^A\mathcal{G}(u_i^B)\\
&+\left(\frac{1}{8} D(y_i^A)\Phi^A\mathcal{G}(u_i^B)\right)\left(\Phi^A\mathcal{G}(u_i^B)\right)\\
&+\gamma\sum_i^{N_{AB}}\left(\ell_2^B(u_i^B) + \ell_2^A(u_i^A) + \kappa u_i^A(u_i^B)^\prime\right)\\
&+\frac{\lambda}{2} \left(\mathcal{L}_3^A +\mathcal{L}_3^B\right)
\end{split}
\end{equation}

\begin{equation}\label{polygradb}
\begin{split}
\frac{\partial\mathcal{L}}{\partial\theta_\ell^B} &=\sum_i^{N_C} \frac{1}{2} C(y_i^A)\Phi^A \frac{\partial \mathcal{G}(u_i^B)}{\partial \theta_\ell^B}\\
&+2\left[\left(\frac{1}{8} D(y_1^A) \Phi^A \mathcal{G}(u_i^B)\right) \left(\Phi^A\frac{\partial\left(\mathcal{G}(u_i^B)\right)}{\partial \theta_\ell^B}\right) \right]\\
&+\sum_i^{N_{AB}}\left(\gamma \kappa u_i^A\frac{\partial u_i^B}{\partial \theta_\ell^B}+\gamma\frac{\partial\ell_2^B(u_i^B)}{\partial \theta_\ell^B}\right)+\lambda\theta_\ell^B
\end{split}
\end{equation}

\begin{equation}\label{polygrada}
\begin{split}
\frac{\partial\mathcal{L}}{\partial\theta_l^A} &=\sum_i^{N_{C}}\frac{1}{2} C(y_i^A) \frac{\partial\Phi^A}{\partial \theta_\ell^A}\mathcal{G}(u_i^B)\\
&+2\left[\left(\frac{1}{8}D(y_i^A)\Phi^A\mathcal{G}(u_i^B)\right)\left(\frac{\partial \Phi^A}{\partial\theta_\ell^A}\mathcal{G}(u_i^B)\right)\right]\\
&+\gamma\sum_i^{N_{AB}}\left(\kappa u_i^B\frac{\partial u_i^A}{\partial \theta_\ell^A}+\frac{\partial \ell_2^A(u_i^A)}{\partial \theta_\ell^A}\right) +\lambda\theta_\ell^A.
\end{split}
\end{equation}

In this case, the whole process can be performed securely if secure matrix addition and multiplication can be constructed. Since operations with public matrices or adding two private matrices can simply be done using the shares without any communication, the remaining operation that requires discussion is secure matrix multiplication. Beaver's triples are used to help in the matrix multiplication.

\subsection{Secure Matrix Multiplication using Beaver Triples}
 First, we briefly recall how to perform the matrix multiplication given that the two parties have already shared a Beaver's triple. Suppose that the computation required is to obtain $P=MN$ where the dimensions of $M,N$ and $P$ are $m\times n, n\times k$ and $m\times k$ respectively. As assumed, the matrices $M$ and $N$ have been secretly shared by the two parties where $A$ keeps $\langle M\rangle_A$ and $\langle N\rangle_A$ and $B$ keeps $\langle M\rangle_B$ and $\langle N\rangle_B.$ To assist with the calculation, in the preprocessing phase, $A$ and $B$ have generated three matrices $D,E,F$ of dimension $m\times n,n\times k$ and $m\times k$ respectively where $A$ keeps $\langle D\rangle_A, \langle E\rangle_A$ and $\langle F\rangle_A$ while $B$ keeps $\langle D\rangle_B,\langle E\rangle_B$ and $\langle F\rangle_B$ such that $DE=F.$

\begin{algorithm} [ht]
\caption{Secure Matrix Multiplication}
\begin{algorithmic}[1]\label{algo_BTMM}
\REQUIRE $M,N$ two matrices to be multiplied with dimensions $m\times n$ and $n\times k$ respectively and secretly shared between $A$ and $B.$ Triple $(D,E,F=DE)$ of matrices with dimension $m\times n, n\times k$ and $m\times k$ respectively secretly shared between $A$ and $B.$

\STATE $\boldsymbol A$ do:
\STATE ${\langle\delta\rangle_A} \xleftarrow{} \langle M\rangle_A - \langle D\rangle_A,{\langle\gamma\rangle_A} \xleftarrow{} \langle N\rangle_A - \langle E\rangle_A$ and send them to $B$;
\STATE $\boldsymbol B$ do:
\STATE ${\langle\delta\rangle_B} \xleftarrow{} \langle M\rangle_B - \langle D\rangle_B,{\langle\gamma\rangle_B} \xleftarrow{} \langle N\rangle_B - \langle E\rangle_B$ and send them to $A$;
\STATE $\boldsymbol A$ and $\boldsymbol B$ recovers $\delta = \langle\delta\rangle_A+\langle\delta\rangle_B$ and $\gamma = \langle\gamma\rangle_A+\langle\gamma\rangle_B.$
\STATE $\boldsymbol A$ do:
\STATE $\langle P\rangle_A\xleftarrow{} \langle M\rangle_A \gamma + \delta \langle N\rangle_A+ \langle F\rangle_A$;
\STATE $\boldsymbol B$ do:
\STATE $\langle P\rangle_B\xleftarrow{} \langle M\rangle_B \gamma + \delta \langle N\rangle_B+ \langle F\rangle_B-\delta\gamma$;
\end{algorithmic}
\end{algorithm}

It is easy to see that $\langle P\rangle_A + \langle P\rangle_B = MN,$ which is what is required for this protocol. As can be seen, this method guarantees efficient online computation in the cost of having offline phase where players generated the Beavers triples. So next we discuss the scheme to generate the triples.

\subsection{Beaver Triples Generation}

In the preprocessing phase, the Beaver's triples generation protocol uses a sub-protocol to perform the secure matrix multiplication with the help of a third party, which we will call Carlos. Recall that having two matrices $U$ and $V$ owned respectively by $A$ and $B,$ they want to calculate $UV$ securely with the help of Carlos. In order to do this, Alice and Bob individually generate shares for $U$ and $V$ respectively. That is, we have $U=\langle U\rangle_0 + \langle U\rangle_1$ and $V=\langle V\rangle_0 + \langle V\rangle_1.$ Then we have $UV= (\langle U\rangle_0 + \langle U\rangle_1)\cdot(\langle V\rangle_0 + \langle V\rangle_1)= \langle U\rangle_0\langle V\rangle_0 + \langle U\rangle_0\langle V\rangle_1 + \langle U\rangle_1\langle V \rangle_0 +\langle U\rangle_1\langle V\rangle_1.$ So if Alice sends $\langle U\rangle_1$ to Bob and Bob sends $\langle V\rangle_0$ to Alice:
\begin{enumerate}
\item $\langle U\rangle_0\langle V\rangle_0$ can be privately calculated by Alice
\item $\langle U\rangle_1\langle V\rangle_1$ can be privately calculated by Bob
\item $\langle U\rangle_1\langle V\rangle_0$ can be privately calculated by both Alice and Bob
\item However, no one can calculate $\langle U\rangle_0\langle V\rangle_1$ yet. This is what Carlos will calculate.
\end{enumerate}

\begin{algorithm} [ht]
\caption{Offline Secure Matrix Multiplication}
\begin{algorithmic}[1]\label{algo_OffMM}
\REQUIRE $U$ and $V,$ two matrices to be multiplied with dimensions $m\times n$ and $n\times k$ respectively; $U$ is owned by $A$ and $V$ is owned by $B.$
\STATE Invite a third party $C;$
\STATE $\boldsymbol A$ do:
\STATE Randomly choose $\langle U\rangle_0$ and set $\langle U\rangle_1=U-\langle U\rangle_0$;
\STATE Send $\langle U\rangle_1$ to $B$ and $\langle U\rangle_0$ to $C$;
\STATE $\boldsymbol B$ do:
\STATE Randomly choose $\langle V\rangle_0$ and set $\langle V\rangle_1=V-\langle V\rangle_0$
\STATE Send $\langle V\rangle_0$ to $A$ and $\langle V\rangle_1$ to $C$;
\STATE $\boldsymbol C$ do:
\STATE Compute $\tilde{W}=\langle U\rangle_0\langle V\rangle_1$;
\STATE Randomly choose $\langle \tilde{W}\rangle_A$ and set $\langle \tilde{W}\rangle_B=\tilde{W}-\langle \tilde{W}\rangle_A$;
\STATE Send $\langle \tilde{W}\rangle_A$ to $A$ and $\langle\tilde{W}\rangle_B$ to $B$;
\STATE $\boldsymbol A$ do:
\STATE Set $\langle W\rangle_A=\langle U\rangle_0\langle V\rangle_0 + \langle U\rangle_1\langle V\rangle_0 + \langle \tilde{W}\rangle_A$;
\STATE $\boldsymbol B$ do:
\STATE Set $\langle W\rangle_B = \langle U\rangle_1\langle V\rangle_1 + \langle \tilde{W}\rangle_B$;
\end{algorithmic}
\end{algorithm}

By the use of Algorithm \ref{algo_OffMM}, Algorithm \ref{algo_BTG} generates triple $(D,E,F)$ such that:
\begin{enumerate}
\item $DE=F$
\item Alice holds $\langle D\rangle_A,\langle E\rangle_A$ and $\langle F\rangle_A$ without learning anything about $(D,E,F), \langle D\rangle_B, \langle E\rangle_B$ and $\langle F\rangle_B.$
\item Bob holds $\langle D\rangle_B,\langle E\rangle_B$ and $\langle F\rangle_B$ without learning anything about $(D,E,F), \langle D\rangle_A, \langle E\rangle_A$ and $\langle F\rangle_A.$
\end{enumerate}

\begin{algorithm} [ht]
\caption{Beaver Triples Generation}
\begin{algorithmic}[1]\label{algo_BTG}
\REQUIRE The dimensions of the required matrices, $m\times n, n\times k$ and $m\times k$;
\STATE $\boldsymbol A$ do:
\STATE Randomly choose $\langle D\rangle_A$ and $\langle E\rangle_A$ ;
\STATE $\boldsymbol B$ do:
\STATE Randomly choose $\langle D\rangle_B$ and $\langle E\rangle_B$ ;
\STATE $\boldsymbol A$ and $\boldsymbol B$ do:
\STATE Perform Algorithm \ref{algo_OffMM} with $U=\langle D\rangle_A$ and $V=\langle E\rangle_B$ to get $W=\langle D\rangle_A\langle E\rangle_B$ such that $A$ holds $\langle W\rangle_A$ and $B$ holds $\langle W\rangle_B$;
\STATE Perform Algorithm \ref{algo_OffMM} with the role of $A$ and $B$ reversed, $U=\langle D\rangle_B$ and $V=\langle E\rangle_A$ to get $Z=\langle D\rangle_1\langle E\rangle_0$ such that $A$ holds $\langle Z\rangle_A$ and $B$ holds $\langle Z\rangle_B$;
\STATE $\boldsymbol A$ do:
\STATE Set $\langle F\rangle_A = \langle D\rangle_A\langle E\rangle_A + \langle W\rangle_A + \langle Z\rangle_A$;
\STATE $\boldsymbol B$ do:
\STATE Set $\langle F\rangle_B = \langle D\rangle_B\langle E\rangle_B + \langle W\rangle_B + \langle Z\rangle_B$;
\end{algorithmic}
\end{algorithm}
Lastly, during offline phase, Alice and Bob also requested Carlos to generate sufficient number of shares for zero matrices with various dimensions.

\subsection{FTL Algorithm - Secret Sharing based}

Before discussing our FTL protocol that is constructed based on Beaver triples, we first give some notation to simplify Equations \eqref{polyloss}, \eqref{polygradb} and \eqref{polygrada} based on the parties needed to complete the calculation.
\begin{enumerate}
\item Let $\mathcal{L}_A=\sum_i^{N_C} \ell_1(y_i^A,0)+\gamma\sum_{i}^{N_{AB}} \ell_2^A(u_i^A) + \frac{\lambda}{2}\mathcal{L}_3^A,$
\item Let $\mathcal{L}_B=\gamma \sum_i^{N_{AB}}\ell_2^B(u_i^B)+\frac{\lambda}{2}(\mathcal{L}_3^B),$
\item Let
\begin{equation*}
\begin{split}
\mathcal{L}_{AB} &= \frac{1}{2}\sum_i^{N_C} C(y_i^A)\Phi^A\mathcal{G}(u_i^B)\\
&+\frac{1}{8}\left(D(y_i^A)\Phi^A\mathcal{G}(u_i^B)\right)\left(\Phi^A\mathcal{G}(u_i^B)\right)\\
&+\gamma\kappa\sum_{i}^{N_{AB}} u_i^A(u_i^B)^\prime
\end{split}
\end{equation*}
with
\begin{itemize}
\item For $A:$
\begin{itemize}
\item $\mathcal{L}_{AB}^{(A,1)}=\left(\frac{1}{2}C(y_i^A)\Phi^A\right)_{i=1,\cdots, N_C},$
\item $\mathcal{L}_{AB}^{(A,2)}=\left(\frac{1}{8}D(y_i^A)\Phi^A\right)_{i=1,\cdots, N_C}$
\item $\mathcal{L}_{AB}^{(A,3)}=\left(\Phi^A\right)_{i=1,\cdots, N_C}$ and
\item $\mathcal{L}_{AB}^{(A,4)}=(\gamma \kappa u_i^A)_{i=1,\cdots, N_{AB}}.$
\end{itemize}
\item For $B:$
\begin{itemize}
\item $\mathcal{L}_{AB}^{(B,1)}=\left(\mathcal{G}(u_i^B)\right)_{i=1,\cdots, N_C},$
and
\item $\mathcal{L}_{AB}^{(B,2)}=(u_i^B)_{i=1,\cdots,N_C}.$
\end{itemize}
\end{itemize}
Then

\begin{equation*}
\label{splitpolyloss}
% \tiny
% \footnotesize
\begin{split}
\mathcal{L}&=\mathcal{L}_A + \mathcal{L}_B + \sum_i^{N_C} \mathcal{L}_{AB}^{(A,1)}(i)\mathcal{L}_{AB}^{(B,1)}(i)\\
&+ \left(\mathcal{L}_{AB}^{(A,2)}(i)\mathcal{L}_{AB}^{(B,1)}(i)\right)\left(\mathcal{L}_{AB}^{(A,3)}(i)\mathcal{L}_{AB}^{(B,1)}(i)\right)\\
&+ \sum_i^{N_{AB}} \mathcal{L}_{AB}^{(A,4)}(i)\left(\mathcal{L}_{AB}^{(B,2)}(i)\right)^\prime.
\end{split}
\end{equation*}

% \begin{equation*}
% % \label{splitpolyloss}
% % \tiny
% \footnotesize
% \begin{split}
% \mathcal{L}&=\mathcal{L}_A + \mathcal{L}_B + \sum_i^{N_C} \mathcal{L}_{AB}^{(A,1)}(i)\mathcal{L}_{AB}^{(B,1)}(i)\\
% &+ {\left(\mathcal{L}_{AB}^{(A,2)}(i)\mathcal{L}_{AB}^{(B,1)}(i)\right)}\\
% & \times {\left(\mathcal{L}_{AB}^{(A,3)}(i)\mathcal{L}_{AB}^{(B,1)}(i)\right)}\\
% &+ \sum_i^{N_{AB}} \mathcal{L}_{AB}^{(A,4)}(i)\left(\mathcal{L}_{AB}^{(B,2)}(i)\right)^\prime.
% \end{split}
% \end{equation*}

\item Let $D^{(B,\ell)}_B=\sum_i^{N_{AB}}\gamma\frac{\partial \ell_2^B(u_i^B)}{\partial \theta_\ell^B}+\lambda \theta_\ell^B,$
\item Let

\begin{equation*}
% \tiny
% \footnotesize
\begin{split}
D^{(B,\ell)}_{AB} &= \sum_i^{N_C} \frac{1}{2} C(y_i^A)\Phi^A \frac{\partial \mathcal{G}(u_i^B)}{\partial \theta_\ell^B}\\
&+2{(\frac{1}{8}D(y_i^A)\Phi^A)\mathcal{G}(u_i^B)(\Phi^A)(\frac{\partial \mathcal{G}(u_i^B)}{\partial \theta_\ell^B})}\\
&+\sum_i^{N_{AB}} \gamma\kappa u_i^A\frac{\partial u_i^B}{\partial \theta_\ell^B}
\end{split}
\end{equation*}

% \begin{equation*}
% % \tiny
% \footnotesize
% \begin{split}
% D^{(B,\ell)}_{AB} &= \sum_i^{N_C} \frac{1}{2} C(y_i^A)\Phi^A \frac{\partial \mathcal{G}(u_i^B)}{\partial \theta_\ell^B}\\
% &+2{\left[\left(\frac{1}{8}D(y_i^A)\Phi^A\right)\left(\mathcal{G}(u_i^B)\right)\right]} \\
% & \times {\left[(\Phi^A)\left(\frac{\partial \mathcal{G}(u_i^B)}{\partial \theta_\ell^B}\right)\right]}\\
% &+\sum_i^{N_{AB}} \gamma\kappa u_i^A\frac{\partial u_i^B}{\partial \theta_\ell^B}
% \end{split}
% \end{equation*}

with
\begin{itemize}
\item For $B$:
\begin{itemize}
\item $D_{AB,B,1}^{(B,\ell)}=\left(\frac{\partial\mathcal{G}(u_i^B)}{\partial \theta_\ell^B}\right)_{i=1,\cdots,N_C},$
\item $D_{AB,B,2}^{(B,\ell)}=\left(\frac{\partial u_i^B}{\partial\theta_\ell^B}\right)_{i=1,\cdots, N_{AB}}.$
\end{itemize}
\end{itemize}
Then

\begin{equation*}\label{splitpolygradb}
% \tiny
% \footnotesize
\begin{split}
% \tiny
% \scriptsize
\frac{\partial \mathcal{L}}{\partial\theta_\ell^B}&=D_B^{B,\ell}+\sum_i^{N_C}\mathcal{L}_{AB}^{(A,1)}(i) D_{AB,B,1}^{(B,\ell)}(i)\\
&+2{\left(\mathcal{L}_{AB}^{(A,2)}(i)\mathcal{L}_{AB}^{(B,1)}(i)\right)\left(\mathcal{L}_{AB}^{(A,3)}(i)D_{AB,B,1}^{(B,\ell)}(i)\right)}\\
&+\sum_i^{N_{AB}} \mathcal{L}_{AB}^{(A,4)}(i)D_{AB,B,2}^{(B,\ell)}(i)
\end{split}
\end{equation*}

\item Let $D^{(A,\ell)}_A=\sum_i^{N_{AB}}\gamma\frac{\partial \ell_2^A(u_i^A)}{\partial \theta_\ell^A}+\lambda \theta_\ell^A,$
\item Let

\begin{equation*}
% \tiny
% \footnotesize
\begin{split}
D^{(A,\ell)}_{AB}&=\sum_i^{N_{C}}\frac{1}{2} C(y_i^A) \frac{\partial\Phi^A}{\partial \theta_\ell^A}\mathcal{G}(u_i^B)\\
&+2\left(\frac{1}{8}D(y_i^A)\Phi^A\right)\mathcal{G}(u_i^B)\left(\frac{\partial \Phi^A}{\partial\theta_\ell^A}\right)\mathcal{G}(u_i^B)\\
&+\gamma\sum_i^{N_{AB}}\left(\kappa u_i^B\frac{\partial u_i^A}{\partial \theta_\ell^A}\right)
\end{split}
\end{equation*}

with
\begin{itemize}
\item For $A:$
\begin{itemize}
\item $D_{AB,A,1}^{(A,\ell)}=\left(\frac{1}{2}C(y_i^A)\frac{\partial\Phi^A}{\partial\theta_\ell^A}\right)_{i=1,\cdots, N_C},$
\item $D_{AB,A,2}^{(A,\ell)}=\left(\frac{\partial\Phi^A}{\partial\theta_\ell^A}\right)_{i=1,\cdots, N_C}$ and
\item $D_{AB,A,3}^{(A,\ell)}=\left(\gamma\kappa\frac{\partial u_i^A}{\partial \theta_\ell^A}\right)_{i=1,\cdots, N_{AB}}.$
\end{itemize}
\end{itemize}
Then

% \begin{equation}\label{splitpolygrada}
% % \tiny
% \footnotesize
% \begin{split}
% \frac{\partial \mathcal{L}}{\partial\theta_\ell^A}&= D_A^{(A,\ell)}+\sum_i^{N_{C}}D_{AB,A,1}^{(A,\ell)}(i)\mathcal{L}_{AB}^{(B,1)}(i)\\
% &+2\left[\left(\mathcal{L}_{A,B}^{(A,2)}\mathcal{L}_{AB}^{(B,1)}(i)\right)\left(D_{AB,A,2}^{(A,\ell)}(i)\mathcal{L}_{AB}^{(B,1)}(i)\right)\right]\\
% &+\gamma\sum_i^{N_{AB}}\left(D_{AB,A,3}^{(A,\ell}(i)(u_i^B)'\right)
% \end{split}
% \end{equation}

\begin{equation*}\label{splitpolygrada}
% \tiny
% \footnotesize
\begin{split}
\frac{\partial \mathcal{L}}{\partial\theta_\ell^A}&= D_A^{(A,\ell)}+\sum_i^{N_{C}}D_{AB,A,1}^{(A,\ell)}(i)\mathcal{L}_{AB}^{(B,1)}(i)\\
&+2\left(\mathcal{L}_{A,B}^{(A,2)}\mathcal{L}_{AB}^{(B,1)}(i)\right)\left(D_{AB,A,2}^{(A,\ell)}(i)\mathcal{L}_{AB}^{(B,1)}(i)\right)\\
&+\gamma\sum_i^{N_{AB}}\left(D_{AB,A,3}^{(A,\ell}(i)(u_i^B)'\right)
\end{split}
\end{equation*}

\end{enumerate}

To perform the training scheme, both Alice and Bob first initialize and execute their respective neural networks $Net^A$ and $Net^B$ locally to obtain $u_i^A$ and $u_i^B.$ Alice computes $\{h_k^A(u_i^A,y_i^A)\}_{k=1,\cdots, K_A}.$ Then for each $k,$ she randomly chooses a mask $\langle h_k^A(u_i^A,y_i^A)\rangle_A$ and sets $\langle h_k^A(u_i^A,y_i^A)\rangle_B =h_k^A(u_i^A,y_i^A) - \langle h_k^A(u_i^A,y_i^A)\rangle_A.$ Then Alice sends $\langle h_k^A(u_i^A,y_i^A)\rangle_B$ to Bob for $k=1,\cdots, K_A.$ Similarly, Bob computes $\{h_k^B(u_i^B)\}_{k=1,\cdots, K_B}$ and for each $k,$ he randomly chooses $\langle h_k^B(u_i^B)\rangle_B$ and sets $\langle h_k^B(u_i^B)\rangle_A = h_k^B(u_i^B)-\langle h_k^B(u_i^B)\rangle_B$, which is then sent to Alice.

In our scenario, $K_A=7,$ with:
\begin{enumerate}
\item $h_1^A(u_i^A,y_i^A) = \mathcal{L}_{AB}^{(A,1)},$
\item $h_2^A(u_i^A,y_i^A) = \mathcal{L}_{AB}^{(A,2)},$
\item $h_3^A(u_i^A,y_i^A) = \mathcal{L}_{AB}^{(A,3)},$
\item $h_4^A(u_i^A,y_i^A) = \mathcal{L}_{AB}^{(A,4)},$
\item $h_5^A(u_i^A,y_i^A) = \mathcal{D}_{AB,A,1}^{(A,\ell)},$
\item $h_6^A(u_i^A,y_i^A) = \mathcal{D}_{AB,A,2}^{(A,\ell)},$
\item $h_7^A(u_i^A,y_i^A) = \mathcal{D}_{AB,A,3}^{(A,\ell)}$
\end{enumerate}
and $K_B=4$ with
\begin{enumerate}
\item $h_1^B(u_i^B) = \mathcal{L}_{AB}^{(B,1)},$
\item $h_2^B(u_i^B) = \mathcal{L}_{AB}^{(B,2)},$
\item $h_3^B(u_i^B) = \mathcal{D}_{AB,B,1}^{(B,\ell)}$
\item $h_4^B(u_i^B) = \mathcal{D}_{AB,B,2}^{(B,\ell)}.$
\end{enumerate}

In addition, Alice privately computes $\mathcal{L}_A$ and $D_A^{(A,\ell)}$ while Bob privately computes $\mathcal{L}_B$ and $D_B^{(B,\ell)}.$ Algorithm \ref{algo_BTPred} provides the training protocol for one iteration based on Beaver triples generated by Algorithm \ref{algo_BTG}.

\begin{algorithm} [ht]
\caption{FTL Training: Beaver triples based}
\begin{algorithmic}[1]\label{algo_BTT}
\REQUIRE Alice holds $h_k^A, \mathcal{L}_A$ and $D_A^{(A,\ell)}$ while Bob holds $h_k^B, \mathcal{L}_B$ and $D_B^{(B,\ell)}.$ In the offline phase, they have also generated sufficient triples with the appropriate dimensions. We also require a threshold $\epsilon$ for termination condition;
\STATE Calculate $\mathcal{L}_{AB}$ with $3N_C+N_{AB}$ inner products of length $d$ and two real number multiplications. Alice receives $\langle \mathcal{L}_{AB}\rangle_A$ and Bob receives $\langle \mathcal{L}_{AB}\rangle_B.$ Alice sets $\langle \mathcal{L}\rangle_A = \mathcal{L}_A + \langle \mathcal{L}_{AB}\rangle_A$ and Bob sets $\langle \mathcal{L}\rangle_B = \mathcal{L}_B+\langle \mathcal{L}_{AB}\rangle_B$;
\STATE Both Alice and Bob publish their shares so they can individually recover $\mathcal{L}$;
\STATE For each $\theta_\ell^B\in \Theta^B,$ calculate $D_{AB}^{B,\ell}$ with $3N_C+N_{AB}$ inner product of vectors of length $d$ and two real number multiplications. Alice receives $\langle D_{AB}^{(B,\ell)}\rangle_A$ and sets $\left\langle\frac{\partial\mathcal{L}}{\partial \theta_\ell^B}\right\rangle_A = \langle D_{AB}^{(B,\ell)}\rangle_A.$ In the same time, Bob receives $\langle D_{AB}^{(B,\ell)}\rangle_B$ and sets $\left\langle\frac{\partial\mathcal{L}}{\partial \theta_\ell^B}\right\rangle_B = D_B^{(B,\ell)}+\langle D_{AB}^{(B,\ell)}\rangle_B$;
\STATE Alice sends $\left\langle\frac{\partial\mathcal{L}}{\partial \theta_\ell^B}\right\rangle_A$ to Bob;
\STATE Bob recovers $\frac{\partial\mathcal{L}}{\partial \theta_\ell^B}=\left\langle\frac{\partial\mathcal{L}}{\partial \theta_\ell^B}\right\rangle_A+\left\langle \frac{\partial\mathcal{L}}{\partial \theta_\ell^B}\right\rangle_B$;
\STATE Bob updates $\theta_\ell^B$;
\STATE For each $\theta_\ell^A\in \Theta^A,$ calculate $D_{AB}^{A,\ell}$ with $3N_C+N_{AB}$ inner product of vectors of length $d$ and two real number multiplications. Alice receives $\langle D_{AB}^{(A,\ell)}\rangle_A$ and sets $\left\langle\frac{\partial\mathcal{L}}{\partial \theta_\ell^A}\right\rangle_A = D_{A}^{(A,\ell)}+\langle D_{AB}^{(A,\ell)}\rangle_A.$ In the same time, Bob receives $\langle D_{AB}^{(A,\ell)}\rangle_B$ and sets $\left\langle\frac{\partial\mathcal{L}}{\partial \theta_\ell^A}\right\rangle_B = \langle D_{AB}^{(A,\ell)}\rangle_B$;
\STATE Bob sends $\left\langle\frac{\partial\mathcal{L}}{\partial \theta_\ell^A}\right\rangle_B$ to Alice;
\STATE Alice recovers $\frac{\partial\mathcal{L}}{\partial \theta_\ell^A}=\left\langle\frac{\partial\mathcal{L}}{\partial \theta_\ell^A}\right\rangle_A+\left\langle \frac{\partial\mathcal{L}}{\partial \theta_\ell^A}\right\rangle_B$;
\STATE Alice updates $\theta_\ell^A$;
\STATE Bob updates $\theta_\ell^B$;
\STATE Repeat as long as $\mathcal{L}_{prev}-\mathcal{L}\geq\epsilon;$
\end{algorithmic}
\end{algorithm}

After the training is completed, we proceed to the prediction phase. Recall that after the training phase, Alice has the optimal value for $\Theta^A$ while Bob has the optimal value for $\Theta^B.$ Suppose that now $B$ wants to learn the label for $\{x_j^B\}_{j\in N_B}.$ The protocol can be found in Algorithm \ref{algo_BTPred}.

\begin{algorithm} [ht]
\caption{FTL Prediction: Beaver triples based}
\begin{algorithmic}[1]\label{algo_BTPred}
\REQUIRE Alice holds the optimal parameter $\Theta^A$ and Bob holds the optimal parameter $\Theta^B$ and unlabeled data $\{x_j^B\}_{j\in N_B}$;
\STATE $\boldsymbol B$ do:
\STATE Calculate $u_j^B = Net^B(\Theta^B,x_j^B)$;
\STATE Calculate $\mathcal{G}(u_j^B)$;
\STATE Randomly choose $\langle \mathcal{G}(u_j^B)\rangle_B$;
\STATE Set $\langle\mathcal{G}(u_j^B)\rangle_A = \mathcal{G}(u_j^B)-\langle\mathcal{G}(u_j^B)\rangle_B$ and send it to $A$;
\STATE $\boldsymbol A$ do:
\STATE Calculate $\Phi^A$
\STATE Randomly choose $\langle\Phi^A\rangle_A$;
\STATE Set $\langle\Phi^A\rangle_B = \Phi^A - \langle\Phi^A\rangle_A$ and send it to $B$;
\STATE Perform secure matrix multiplication from Algorithm \ref{algo_BTMM} so $A$ receives $\langle \Phi^A\mathcal{G}(u_j^B)\rangle_A$ and $B$ receives $\langle \Phi^A\mathcal{G}(u_j^B)\rangle_B.$
\STATE $\boldsymbol B$ sends $\langle\Phi^A\mathcal{G}(u_j^B)\rangle_B$ to $A$;
\STATE $\boldsymbol A$ recovers $\varphi(u_j^B)=\Phi^A\mathcal{G}(u_j^B),$ calculates $y_j^B$ and sends it to $B$;
\end{algorithmic}
\end{algorithm}

\begin{theorem}\label{BTsecproofthm}
The protocol in Algorithms \ref{algo_OffMM},\ref{algo_BTG},\ref{algo_BTT} and \ref{algo_BTPred} are information theoretically secure against at most one passive adversary.
\end{theorem}
\begin{proof}
Note that in all of these algorithms, the only information that any party receives regarding any private values is only the share for an $n$-out-of-$n$ secret sharing scheme. So by the property of $n$-out-of-$n$ secret sharing scheme, no one can learn any information about the private values they are not supposed to learn. After the calculation, the same thing can be said since each party only learns about a share of a secret sharing scheme and they cannot learn any information regarding values they are not supposed to learn from there.
\end{proof}

\begin{remark}
Using the argument in \cite{DBLP:journals/iacr/MohasselZ17} we can improve the efficiency in the following manner; for each matrix $A,$ it is always masked by the same random matrix. This optimization does not affect the security of the protocol while significantly improves the efficiency.
\end{remark}

\section{Experimental Evaluation}

In this section, we report experiments conducted on public datasets including: 1) NUS-WIDE dataset \cite{Chua09nus-wide:a} 2) Kaggle's \textit{Default-of-Credit-Card-Clients} \cite{kaggle} (``Default-Credit'') to validate our proposed approach. We study the effectiveness and scalability of the approach with respect to various key factors, including the number of overlapping samples, the dimension of hidden common representations, and the number of features.

The NUS-WIDE dataset consists of 634 low-level features from Flickr images as well as their associated tags and ground truth labels. There are in total 81 ground truth labels. We use the top 1,000 tags as text features and combine all the low-level features including color histograms and color correlograms as image features. We consider solving a one-vs-all classification problem with a data federation formed between party A and party B, where A has 1000 text tag features and labels, while party B has 634 low-level image features. 

The ``Default-Credit'' dataset consists of credit card records including user demographics, history of payments, and bill statements, etc., with users' default payments as labels. After applying one-hot encoding to categorical features, we obtain a dataset with 33 features and 30,000 samples. We then split the dataset both in the feature space and the sample space to simulate a two-party federation problem. Specifically, we assign each sample to party A, party B or both so that there exists a small number of samples overlapping between A and B. All labels are on the side of party A. We will examine the scalability (in section \ref{ftl_scalability}) of the FTL algorithm by dynamically splitting the feature space.

\begin{figure*}[ht]
\centering
\subfigure[Learning loss (1-layer)]{\includegraphics[width=0.24\linewidth]{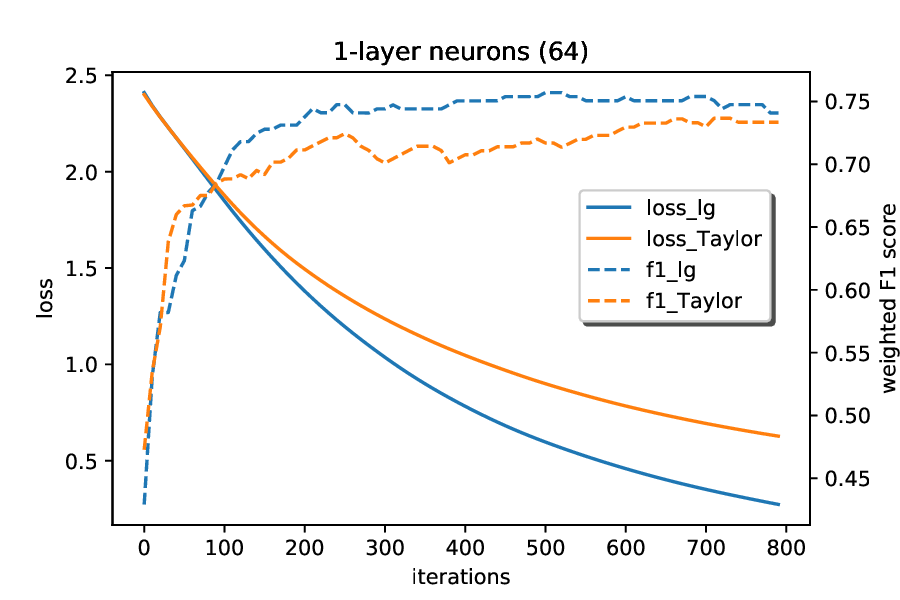}\label{taylor_1}}
\space\space\space\space\space\space\space
\subfigure[Learning loss (2-layer)]{\includegraphics[width=0.24\linewidth]{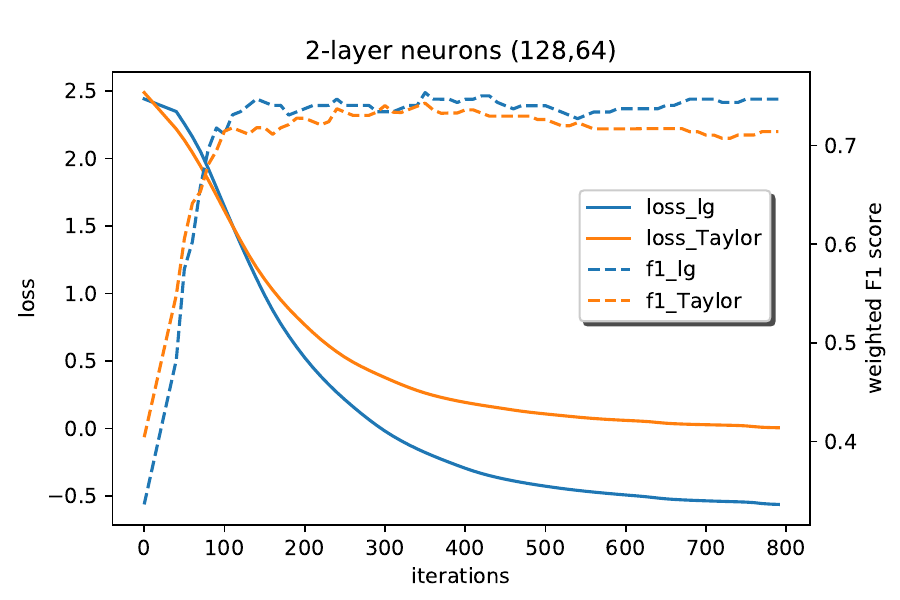}\label{taylor_2}}
\space\space\space\space\space\space\space
\subfigure[F1 vs. \# overlapping pairs]
{\includegraphics[width=0.24\linewidth]{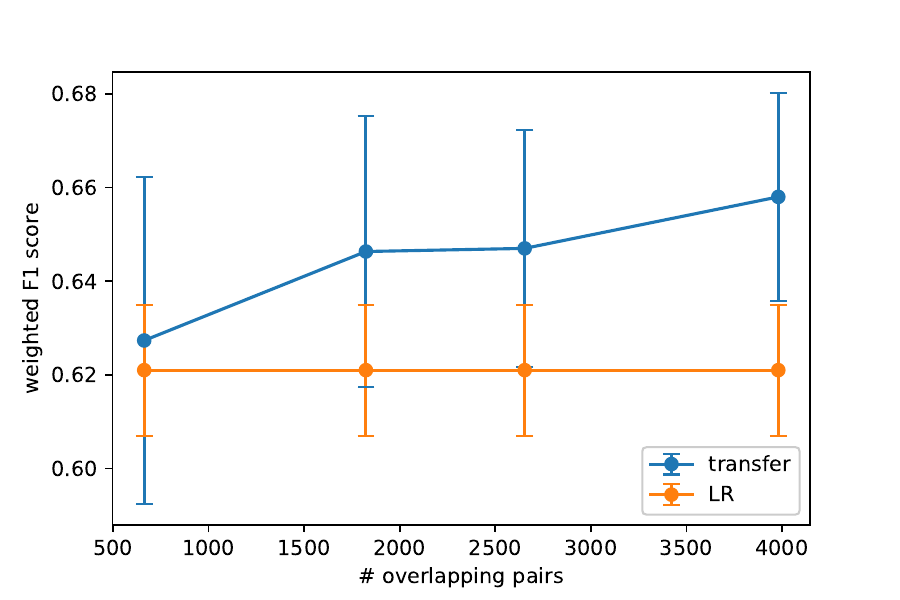}\label{overlap}}
% \subfigure[F1 vs. $k$]
% {\includegraphics[width=0.23\linewidth]{}\label{cv}}
\subfigure[time vs. $d$ (HE)]
{\includegraphics[width=0.24\linewidth]{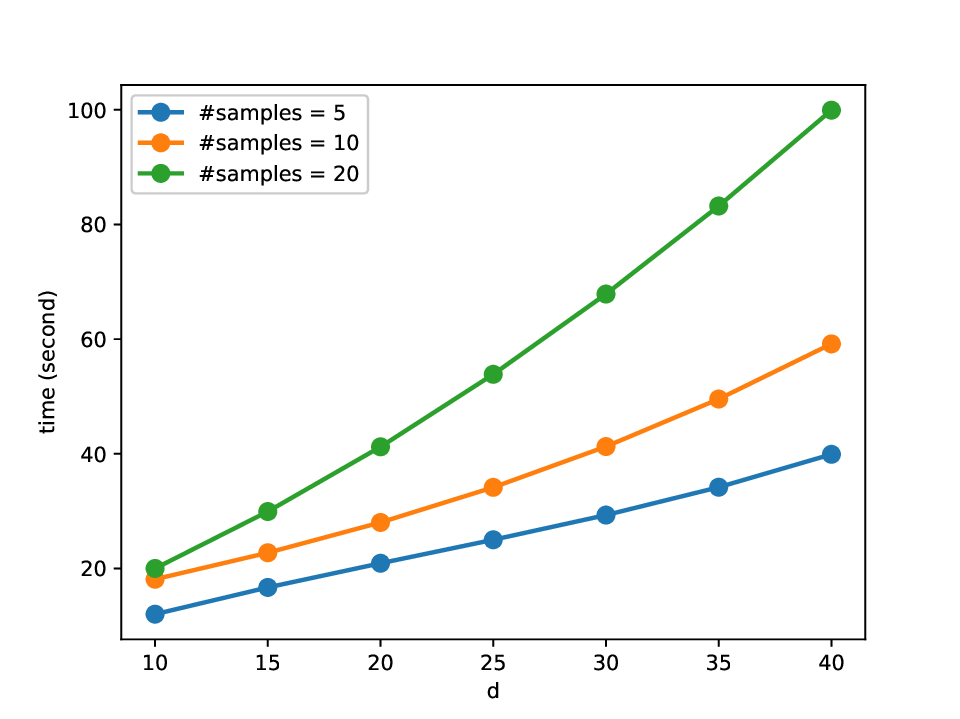}\label{scale_d}}
\space\space\space\space\space\space\space
\subfigure[time vs. \# features (HE)]
{\includegraphics[width=0.24\linewidth]{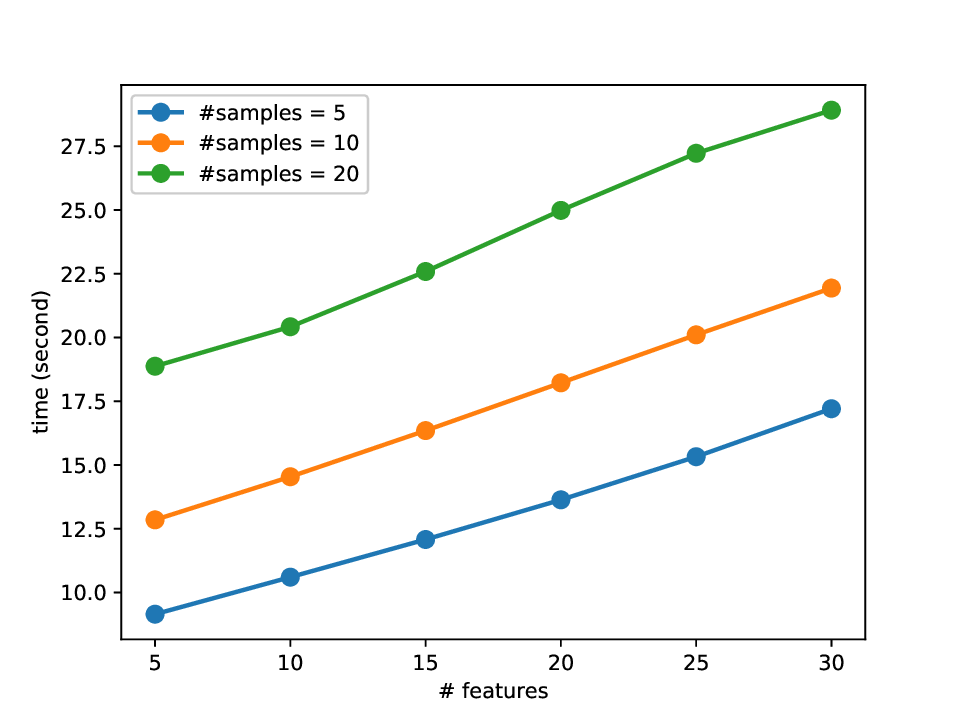}\label{scale_f}}
\space\space\space\space\space\space\space
\subfigure[time vs. \# samples (HE)]
{\includegraphics[width=0.24\linewidth]{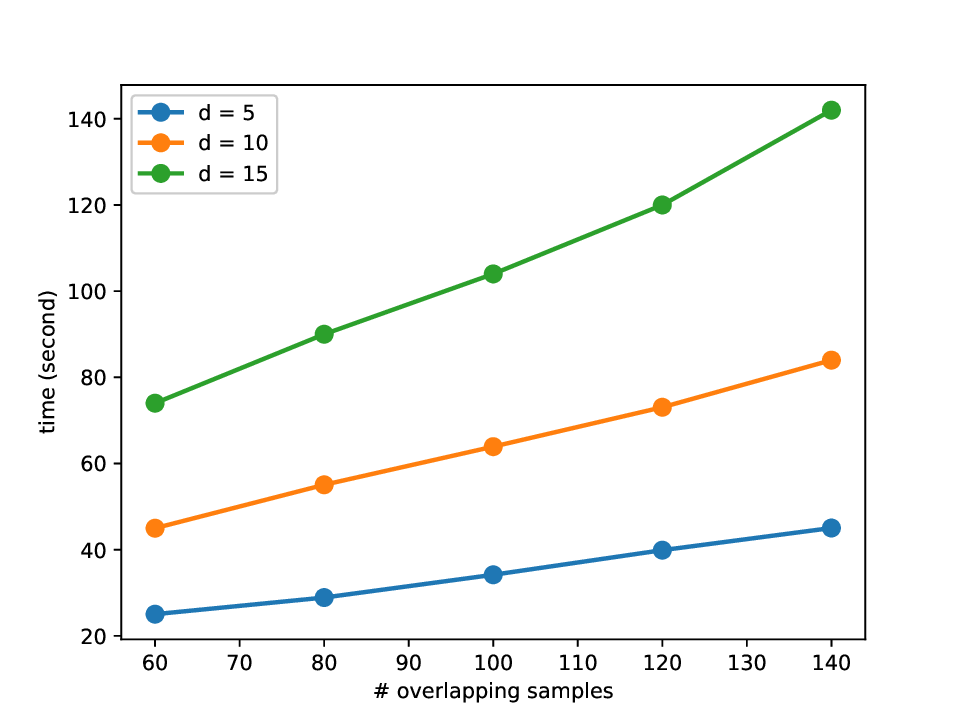}\label{scale_s}}
\subfigure[time vs. $d$ (SS)]
{\includegraphics[width=0.24\linewidth]{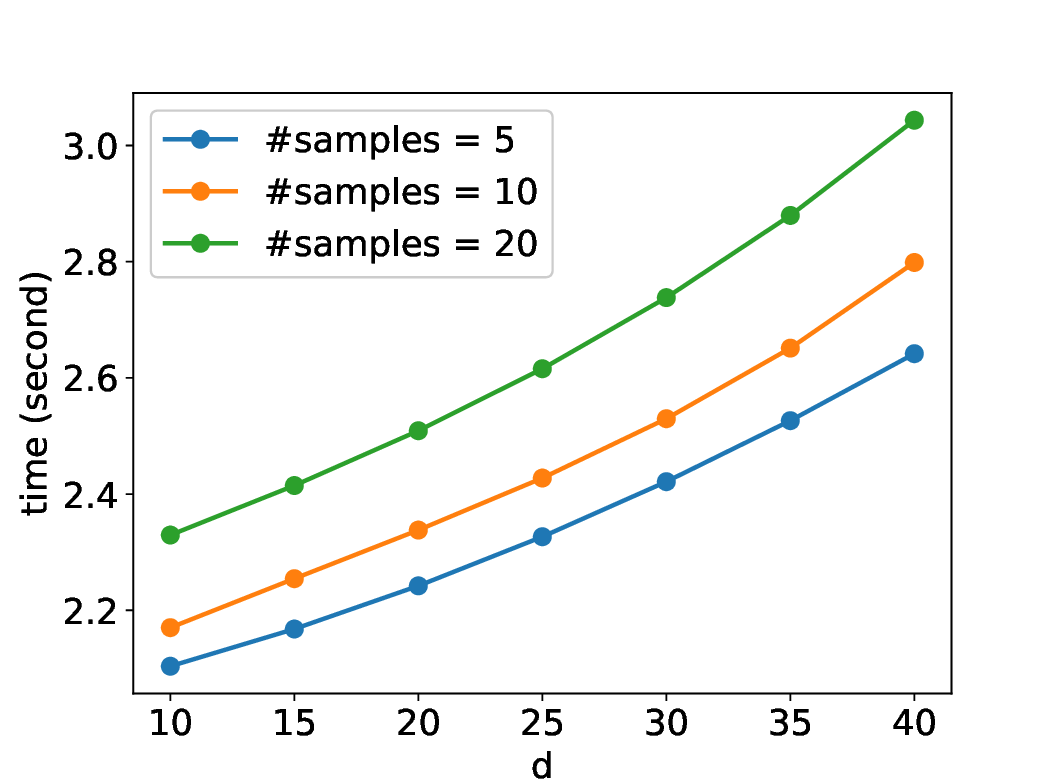}\label{scale_ss_d}}
\space\space\space\space\space\space\space
\subfigure[time vs. \# features (SS)]
{\includegraphics[width=0.24\linewidth]{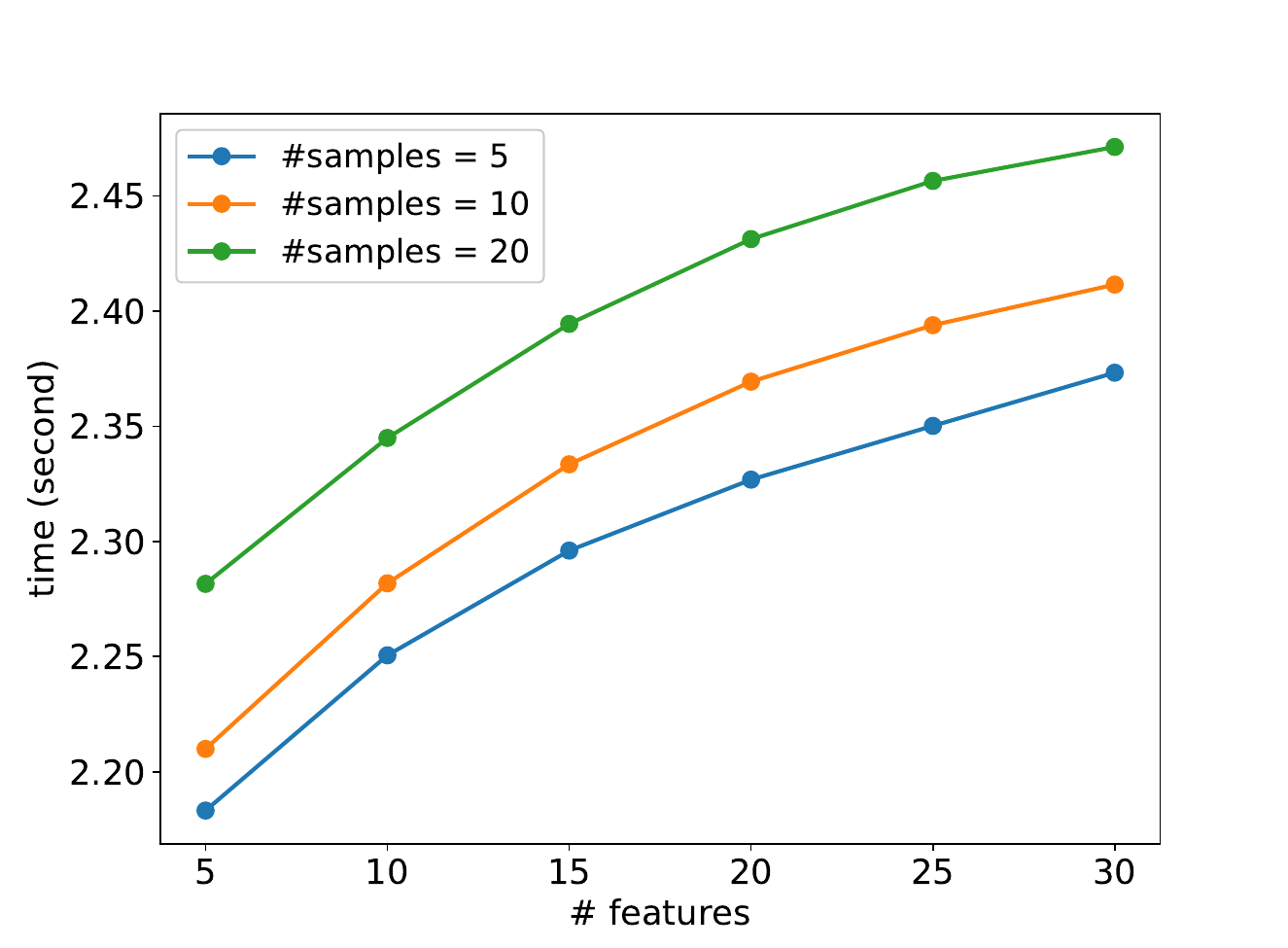}\label{scale_ss_f}}
\space\space\space\space\space\space\space
\subfigure[time vs. \# samples (SS)]
{\includegraphics[width=0.24\linewidth]{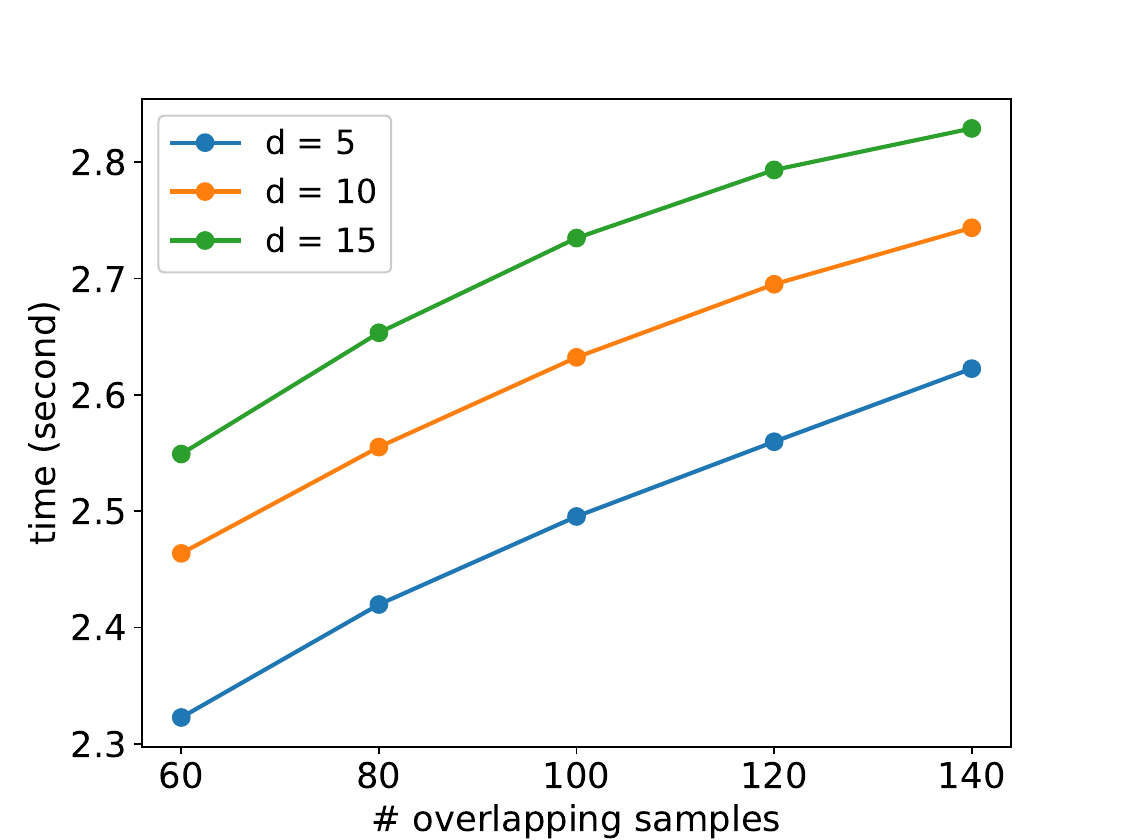}\label{scale_ss_s}}
\caption{Experiment results.}
\label{Taylor}
\end{figure*}

\begin{table*}[ht]\Huge
\centering
\caption{Comparison of weighted F1 scores.}
\label{TLvsLG}
\resizebox{0.80\textwidth}{!}{
    \begin{tabular}{||c|c|c c c c c c||}\hline
% \begin{tabular}{|| c{2cm} | c | c c c c c||}\hline
    Tasks & $N_c$ & SST & TLT & TLL & LR & SVMs & SAEs\\\hline
    water vs. others &  $100$ & $\boldsymbol{0.698}\pm0.011$ & $0.692\pm0.062$ & $0.691\pm0.060$ & $0.685\pm0.020$ & $0.640\pm0.016$ & $0.677\pm 0.048$ \\\hline
    water vs. others &  $200$ & $\boldsymbol{0.707}\pm0.013$ & $0.702\pm0.010$ & $0.701\pm 0.007$ & $0.672\pm 0.023$ & $0.643 \pm 0.038$ & $0.662 \pm 0.010$ \\\hline
    person vs. others &  $100$ & $\boldsymbol{0.703}\pm0.015$ & $0.697\pm 0.010$ & $0.697 \pm 0.020$ & $0.694 \pm 0.026$ & $0.619 \pm 0.050$ & $0.657 \pm 0.030$\\\hline
    person vs. others & $200$ & $\boldsymbol{0.735}\pm0.004$ & $0.733 \pm 0.009$ & $\boldsymbol{0.735} \pm 0.010$ & $0.720 \pm 0.004$ & $0.706 \pm 0.023$ & $0.707 \pm 0.008$ \\\hline
    sky vs. others & $100$ & $0.708\pm0.015$ & $0.700\pm0.022$ & $\boldsymbol{0.713}\pm 0.006$ & $0.694 \pm 0.016$ & $0.679 \pm 0.018$ & $0.667 \pm 0.009$ \\\hline
    sky vs. others & $200$ & $\boldsymbol{0.724}\pm0.014$ & $0.718 \pm 0.033$ & $0.718 \pm 0.024$ & $0.696 \pm 0.026$ & $0.680 \pm 0.042$ & $0.684 \pm 0.056$ \\\hline
\end{tabular}
}
\end{table*}

\subsection{Impact of Taylor Approximation}

We studied the effect of Taylor approximation by monitoring and comparing the training loss decay and the performance of prediction. Here, we test the convergence and precision of the algorithm using the NUS-WIDE data and neural networks with different levels of depth. In the first case, $Net^A$ and $Net^B$ both have one auto-encoder layer with 64 neurons, respectively. In the second case, $Net^A$ and $Net^B$ both have two auto-encoder layers with 128 and 64 neurons, respectively. In both cases, we used 500 training samples, 1,396 overlapping pairs, and set $\gamma=0.05,\lambda=0.005$. We summarize the results in Figures \ref{taylor_1} and \ref{taylor_2}. We found that the loss decays at a similar rate when using Taylor approximation as compared to using the full logistic loss, and the weighted F1 score of the Taylor approximation approach is also comparable to the full logistic approach. The loss converges to a different minima in each of these cases. As we increased the depth of the neural networks, the convergence and the performance of the model did not decay.

% \begin{figure}[ht]
% \centering
% \subfigure[Learning loss (1-layer)]{\includegraphics[width=4.1cm,height=3.5cm]{taylor_1_layer.eps}}
% \subfigure[Learning loss (2-layer)]{\includegraphics[width=4.1cm,height=3.5cm]{}}
% \end{figure}

Most existing secure deep learning frameworks suffer from accuracy loss when adopting privacy-preserving techniques \cite{DBLP:journals/iacr/MohasselZ17}. %For example, SecureML \cite{DBLP:journals/iacr/MohasselZ17} reported more than 1\% accuracy loss when training with 2 hidden layers with 128 neurons in each layer on the MNIST dataset using MPC techniques. In CryptoDL \cite{DBLP:journals/corr/abs-1711-05189}, an inference-only secure deep learning protocol, it is shown that using degree-2 Taylor approximation can result in over 50\% accuracy loss in a 2-layer convolutional neural network.
Using only low-degree Taylor approximation, the drop in accuracy in our approach is much less than the state-of-art secure neural networks with similarly approximations.

\subsection{Performance}

We tested SS-based FTL (SST), HE-based FTL with Taylor loss (TLT) and FTL with logistic loss (TLL). For the self-learning approach, we picked three machine learning models: 1) logistic regression (LR), 2) SVM, and 3) stacked auto-encoders (SAEs). The SAEs are of the same structure as the ones we used for transfer learning, and are connected to a logistic layer for classification. We picked three of the most frequently occurring labels in the NUS-WIDE dataset, i.e., water, person and sky, to conduct one vs. others binary classification tasks. For each experiment, the number of overlapping samples we used is half of the total number of samples in that category. We varied the size of the training sample set and conducted three tests for each experiment with different random partitions of the samples. The parameters $\lambda$ and $\gamma$ are optimized via cross-validation.

Figure \ref{overlap} shows the effect of varying the number of overlapping samples on the performance of transfer learning. The overlapping sample pairs are used to bridge the hidden representations between the two parties. The performance of FTL improves as the overlap between datasets increases.

The comparison of F-score (mean $\pm$ std) among SST, TLT, TLL and the several other machine learning models is shown in Table \ref{TLvsLG}. We observe that SST, TLT and TLL yield comparable performance across all tests. This demonstrates that SST can achieve plain-text level accuracy while TLT can achieve almost lossless accuracy although Taylor approximation is applied. The three FTL models outperform baseline self-learning models significantly using only a small set of training samples under all experimental conditions. In addition, performance improves as we increased the number of training samples. The results demonstrated the robustness of FTL.

\subsection{Scalability}\label{ftl_scalability}

% \begin{table}[ht]
% \centering
% \caption{Evaluation on secret-sharing-based FTL with varied model sizes and overlapping samples}\label{SSL}
% \resizebox{0.48\textwidth}{!}{
%     %\begin{tabular}{||m{2cm}|m{2cm}|m{2cm}|m{2cm}|m{2cm}|m{2cm}|m{2cm}||}
% \begin{tabular}{|| c | c  c  c || c c c || c c c ||}\hline
%     $d$ & & 48 &  &  & 60 &  &  & 72 &  \\\hline
%     $N_c$ & 100 & 200 & 400 & 100 & 200 & 400 & 100 & 200 & 400 \\\hline
%     F-score & 0.718 & 0.738 & 0.788 & 0.727 & 0.752 & 0.791 & 0.731 & 0.773 & 0.793 \\\hline
%     \#param & 0.2M & 0.5M & 0.9M & 0.4M & 0.7M & 1.4M & 0.5M & 1.0M & 2.0M \\\hline
%     RT & 0.22h & 0.41h & 0.5h & 0.26h & 0.51h & 0.61h & 0.36h & 0.62h & 0.72h  \\\hline
%     % bt &&&&&&&&& \\\hline
% \end{tabular}
% }
%\end{table}

We study the scalability using Default-Credit dataset because it allows us to conveniently choose features when we do experiments. Specifically, we study how the training time scales with the number of overlapping samples, the number of target-domain features, and the dimension of hidden representations, denote as $d$. Based on the algorithmic detail of proposed transfer learning approach, the communication cost for B sending a message to A can be calculated by formula $Cost_{B\xrightarrow{}A} = n*(d^2+d)*ct$, where $ct$ is the size of the message and $n$ is the number of samples sent. The same cost applies when sending message from A to B. 

To speed up the secure FTL algorithm, we preform compute-intensive operations in parallel. The logic flow of parallel secure FTL algorithm includes three stages: parallel encryption, parallel gradient calculation, and parallel decryption. Detailed logic flow is shown in Figure \ref{parallel_flow}.

\begin{figure}[!ht]
\centering
\label{SeqVsParallel}
\includegraphics[width=0.90\linewidth, height=0.25\textheight]{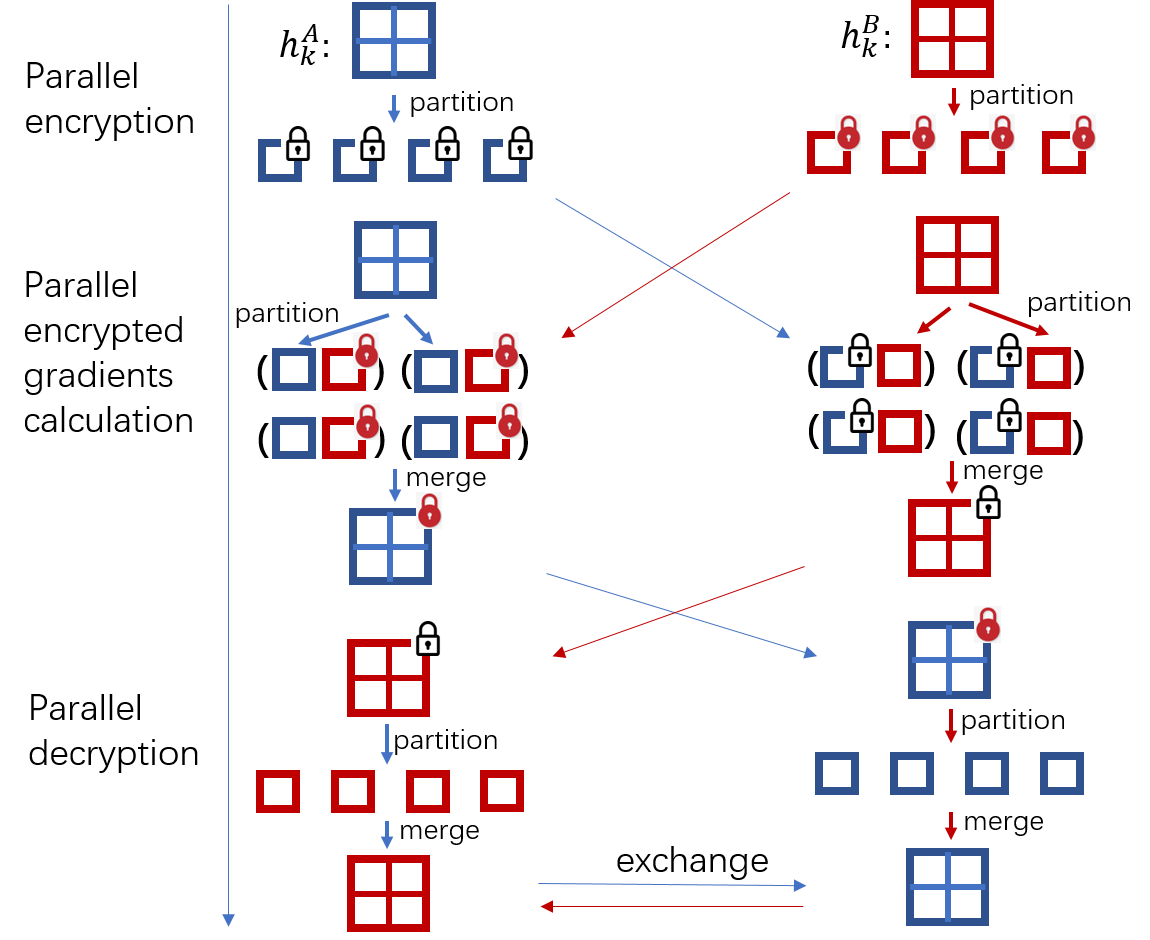}
\caption{Logic flow of parallel secure FTL.} \label{parallel_flow}
\end{figure}

On parallel encryption stage, we parallelly encrypt components that will be sent to the other party. On parallel gradient calculation stage, we parallelly perform operations, including matrix multiplication and addition, on encrypted components to calculate encrypted gradients. On parallel decryption stage, we parallelly decrypt masked loss and gradients. Finally, the two parties exchange decrypted masked gradients that will be used to update neural networks. With 20 partitions, the parallel scheme can boost the secure FTL 100x than sequential scheme.

% Table \ref{SeqVsParallel} shows the comparison of running time between plain and secure FTL in the setting of 500 samples, 32 hidden dimensions, 20 partitions and 10 iteration. The parallel scheme boosts secure FTL 100x than sequential scheme.

Figures \ref{scale_d}, \ref{scale_f} and \ref{scale_s} illustrate that with parallelism applied, the running time of HE-based FTL grows approximately linearly with respect to the size of the hidden representation dimension, the number of target-domain features, as well as the number of overlapping samples respectively. 

% \begin{table}[ht]
% \centering
% \caption{Comparison of running time between plain and secure FTL.}
% \resizebox{0.48\textwidth}{!}{
%     %\begin{tabular}{||m{2cm}|m{2cm}|m{2cm}|m{2cm}|m{2cm}|m{2cm}|m{2cm}||}
% \begin{tabular}{||c | c | c | c ||}\hline
%     Iteration & Plain FTL & Parallel Secure FTL & Sequential Secure FTL \\\hline
%     10  &  0.18 sec & 116.65 sec & 23400 sec \\\hline
% \end{tabular}
% }
% \end{table}

% On the parallel scheme, the communication takes up roughly 33\% of the total runtime. The computation of encrypted gradients accounts for about 42\% of the runtime and the rest 25\% is consumed by encryption and decryption operations. Efficiency and scalability is still a challenge when the model or dataset is large. Further improvement can be done by combining encryption scheme with high-powered GPUs or using less expensive encryption schemes. 

Figures  \ref{scale_ss_d}, \ref{scale_ss_f} and \ref{scale_ss_s} illustrate how the training time varies with the three key factors in the SS setting. The communication cost can be simplified as $O(d^2)$ if keeping other factors constant. As illustrated in Figures \ref{scale_ss_d}, however, the increasing rate of the training time is approaching linear rather than $O(d^2)$. We conjecture that this is due to the computational efficiency of SS-based FTL. Besides, as illustrated in Figure \ref{scale_ss_f} and \ref{scale_ss_s}, respectively, as the feature sizes or overlapping samples increase, the increasing rate of training time drops.

% the effect of the quadratic growth introduced by ${d^2}$ factor is offset by the efficient computation (e.g., matrix vectorization to run operations in parallel)
% That means if we keep increasing feature sizes or overlapping samples, the training time will converge. 

% \begin{table}[ht]
% \centering
% \caption{Comparison of training time between SS and HE with the increasing dimension of hidden representation}\label{table_d}
% \resizebox{0.48\textwidth}{!}{
% \begin{tabular}{|l||*{7}{c|}}\hline
% \backslashbox{protocol}{$d$}
% &\makebox[2em]{10}&\makebox[2em]{15}&\makebox[2em]{20}&\makebox[2em]{25}&\makebox[2em]{30}&\makebox[2em]{35}&\makebox[2em]{40}\\\hline\hline
% HE training time (sec) & 19.97 & 29.12 & 41.03 & 53.88 & 66.74 & 81.91 & 101.02 \\\hline
% SS training time (sec) &  2.33 &  2.41 &  2.52 &  2.61 &  2.73 &  2.89 &   3.04 \\\hline
% \end{tabular}
% }
% \end{table}

\begin{table}[ht]
\centering
\caption{Comparison of training time between SS and HE with the increasing dimension of hidden representation denoted by $d$, the increasing number of target-domain features, and the increasing number of overlapping samples, respectively}\label{table_f}
\resizebox{0.48\textwidth}{!}{
\begin{tabular}{|l||*{6}{c|}}\hline
\backslashbox{protocol}{$d$}
&\makebox[2em]{15}&\makebox[2em]{20}&\makebox[2em]{25}&\makebox[2em]{30}&\makebox[2em]{35}&\makebox[2em]{40}\\\hline\hline
HE training time (sec) & 29.12 & 41.03 & 53.88 & 66.74 & 81.91 & 101.02 \\\hline
SS training time (sec) &  2.41 &  2.52 &  2.61 &  2.73 &  2.89 &   3.04 \\\hline
\hline
\hline
\backslashbox{protocol}{\# features}
&\makebox[2em]{5}&\makebox[2em]{10}&\makebox[2em]{15}&\makebox[2em]{20}&\makebox[2em]{25}&\makebox[2em]{30}\\\hline\hline
HE training time (sec) & 17.82 & 20.45 & 21.67 & 24.03 & 27.12 & 28.58 \\\hline
SS training time (sec) &  2.28 &  2.35 &  2.39 &  2.43 &  2.46 & 2.48 \\\hline
\hline
\hline
\backslashbox{protocol}{\# samples}&\makebox[2em]{60}&\makebox[2em]{80}&\makebox[2em]{100}&\makebox[2em]{120}&\makebox[2em]{140} & \\\hline\hline
HE training time (sec) & 74.21 & 89.91 & 111.12 & 123.79 & 146.48 &\\\hline
SS training time (sec) & 2.55 & 2.65 & 2.74 &  2.79 & 2.83 &\\\hline
\end{tabular}
}
\end{table}

Further, we compare the scalability of SS-based with that of HE-based FTL along the axis of the hidden representation dimension, the number of features, and the number of overlapping samples, respectively. The results are presented in Table \ref{table_f}. We notice that SS-based FTL is running much faster than HE-based FTL. Overall, SS-based FTL speeds up by 1-2 orders of magnitude compared with HE-based FTL. In addition, as shown in the three tables, the increasing rate of the training time of SS-based FTL is much slower than that of HE-based FTL.
% using NUS-WIDE dataset

% Efficiency and scalability is still a challenge without distributed and asynchronous computing techniques or high-powered GPUs. Nevertheless, FTL is parallelable and compatible with high-performance machine learning platforms such as Tensorflow.

\section{Conclusions and Future Work}

In this paper we proposed a secure Federated Transfer Learning (FTL) framework to expand the scope of existing secure federated learning to broader real-world applications. Two secure approaches, namely, homomorphic encryption (HE) and secret sharing are proposed in this paper for preserving privacy. The HE approach is simple, but computationally expensive. The biggest advantages of the secret sharing approach include (i) there is no accuracy loss, (ii) computation is much faster than HE approach. The major drawback of the secret sharing approach is that one has to offline generate and store many triplets before online computation.

We demonstrated that, in contrast to existing secure deep learning approaches which suffer from accuracy loss, FTL is as accurate as non-privacy-preserving approaches, and is superior to non-federated self-learning approaches. The proposed framework is a general privacy-preserving federated transfer learning solution that is not restricted to specific models.

In future research, we will continue improving the efficiency of the FTL framework by using distributed computing techniques with less expensive computation and communication schemes.

\bibliographystyle{IEEEtran}
% argument is your BibTeX string definitions and bibliography database(s)
\bibliography{ref}
\end{document}